\documentclass[journal]{IEEEtai}
\usepackage{soul}
\usepackage{url}
\usepackage[hidelinks]{hyperref}
\usepackage[T1]{fontenc}
\usepackage[utf8]{inputenc}
\usepackage{graphicx}
\usepackage{amsmath}
\usepackage{amsfonts}
\usepackage{amsthm}
\newtheorem{proposition}{Proposition}

\usepackage{amssymb}  
\usepackage{booktabs}
\usepackage{algorithm}
\usepackage{algorithmic}
\usepackage{tcolorbox}
\usepackage{tabularx}
\usepackage{multirow}
\usepackage[table]{xcolor}
\begin{document}

\title{Beyond the Node: Clade-level Selection for MCTS in Automatic Heuristic Design}

\author{Kezhao~Lai, 
        Yutao~Lai, 
        and~Hai-Lin~Liu
\thanks{Kezhao Lai, Yutao Lai, and Hai-Lin Liu are with the School of Mathematics and Statistics, Guangdong University of Technology, China. (Corresponding author: Hai-Lin Liu, e-mail: hlliu@gdut.edu.cn).}%
}



\maketitle

\begin{abstract}
While Monte Carlo Tree Search (MCTS) shows promise in Large Language Model (LLM) based Automatic Heuristic Design (AHD), it suffers from a critical over-exploitation tendency under the limited computational budgets required for heuristic evaluation. To address this limitation, we propose Clade-AHD, a framework that replaces node-level point estimates with clade-level Bayesian beliefs. By aggregating descendant evaluations into Beta distributions and performing Thompson Sampling over these beliefs, Clade-AHD explicitly models uncertainty to guide exploration, enabling more reliable decision-making under sparse and noisy evaluations. Extensive experiments on complex combinatorial optimization problems demonstrate that Clade-AHD consistently outperforms state-of-the-art methods while significantly reducing computational cost. The source code is publicly available at: \url{https://github.com/Mriya0306/Clade-AHD}.
\end{abstract}





\begin{IEEEImpStatement}
Combinatorial optimization algorithms are the cornerstone of modern logistics, supply chains, and chip manufacturing, yet traditional design relies heavily on expensive human expertise. While Large Language Models (LLMs) show immense potential in automating algorithm design, their unguided trial-and-error processes often incur prohibitive computational costs and energy consumption. The efficient heuristic generation framework proposed in this paper breaks this bottleneck, enabling the automated design of superior optimization algorithms under minimal computational budgets. From an economic and environmental perspective, this breakthrough significantly lowers the barrier for enterprises to customize exclusive optimization solutions. Logistics and manufacturing industries can rapidly deploy more efficient scheduling systems, substantially reducing both operational costs and transportation-related carbon emissions. Furthermore, by drastically minimizing futile computational waste during AI reasoning, this technology promotes reduced carbon footprints and lays a solid foundation for the large-scale industrial deployment of green, sustainable artificial intelligence.
\end{IEEEImpStatement}

\begin{IEEEkeywords}
Large Language Models, Automatic Heuristic Design, Monte Carlo Tree Search, Combinatorial Optimization, Clade-level.
\end{IEEEkeywords}

\section{Introduction}
Combinatorial Optimization (CO) problems are a cornerstone of modern science and engineering, with critical applications in fields such as logistics and route planning~\cite{route}, supply chain optimization~\cite{supply_ch}, chip design~\cite{chips1,chips2}, and drug discovery~\cite{drug1,drug2}. The majority of these problems are NP-hard~\cite{Garey1979}, making the search for optimal solutions computationally prohibitive. Consequently, both academia and industry have long focused on designing efficient heuristic algorithms to find high-quality approximate solutions. Nevertheless, traditional heuristic design remains a critical bottleneck, constrained by its reliance on manual expertise and iterative trial-and-error. This dependency results in prohibitive costs.

The emergence of Large Language Models (LLMs) provides a transformative opportunity to overcome these constraints. By functioning as creativity engines, these models enable the rapid exploration of vast algorithmic spaces. Recent advancements, such as LLaMoCo~\cite{ma2026llamoco}, demonstrate that instruction tuning tailored to optimization tasks can significantly enhance the professional quality of generated code. This progress suggests that LLMs can accurately capture and implement complex mathematical logic within diverse programming contexts.

Recently, LLMs have catalyzed a paradigm shift in addressing this enduring challenge, positioning them as creativity operators\cite{romera2024mathematical, Mankowitz2023} within evolutionary frameworks. This concept builds on a rich history of ideas in Automated Machine Learning (AutoML)~\cite{Hutter2019,Feurer2015} and Genetic Programming (GP)~\cite{Koza1992}, further expanding the design space of AHD. Pioneering works such as Funsearch~\cite{romera2024mathematical} and AlphaDev~\cite{Mankowitz2023} have demonstrated the profound potential of this synergy by discovering algorithms that surpass human-crafted baselines. EOH~\cite{liu2024evolution} pioneered the co-evolution of natural language thoughts and code to mimic human iterative design. Building on these foundations, LLaMEA utilizes an evolutionary loop to generate tailored metaheuristics that adapt to specific problem instances~\cite{van2024llamea}. Simultaneously, ReEvo introduces a reflective evolution mechanism where the LLM operates as a hyper-heuristic by learning from historical performance to iteratively refine search logic~\cite{ye2024reevo}. These frameworks effectively automate the design process but share a critical structural limitation inherent to evolutionary populations. By maintaining a strictly bounded pool of candidates and prioritizing immediate fitness, these methods perpetually discard intermediate variations. This greedy culling mechanism systematically destroys the historical topology of the search process. Consequently, the algorithm becomes highly susceptible to local optima because it discards transiently mediocre heuristics that actually serve as essential critical intermediate nodes toward the global optimum.

Monte Carlo Tree Search naturally resolves this structural limitation by providing a persistent topology for global exploration. MCTS-AHD integrates this architecture to retain the complete evolutionary history~\cite{Zheng2025}. However, deploying MCTS for code evolution reveals a fundamental algorithmic mismatch. The exorbitant computational costs associated with evaluating generated heuristics force the search tree into a regime of extreme data sparsity. Traditional tree policies rely on asymptotic convergence and frequentist point estimates of rewards. Under strictly bounded simulations, these point estimates become highly unreliable. Furthermore, relying exclusively on scalar reward values neglects the continuous variance and epistemic uncertainty inherent in partial evolutionary branches. This condition necessitates a transition from frequentist node evaluation to a comprehensive probabilistic reward model.

 To address this challenge, we introduce \textbf{Clade-AHD}, a framework that systematically reformulates automatic heuristic design through a hierarchical Bayesian reward model. Rather than depending on isolated point estimates of heuristic fitness, Clade-AHD constructs a probability distribution of rewards for an entire evolutionary lineage, as illustrated in Figure~\ref{fig:mechanism_comparison}. By aggregating genealogical evidence into a unified probabilistic reward space, the framework explicitly models exploration uncertainty and mitigates estimation variance. This approach avoids the pitfalls of scalar estimations and successfully identifies highly promising evolutionary clades even when observations remain critically sparse. Our specific contributions are:  

(1) \textbf{Hierarchical Bayesian Abstraction}: We redefine the search space from node-level point estimates to Clade-level Bayesian beliefs, modeling heuristic potential as Clade-level Beta distributions. To robustly infer these distributions from sparse observations, we propose a \textbf{Clade-Aware Belief Update} mechanism that aggregates genealogical evidence bottom-up while strictly enforcing causal relevance via depth attenuated credit assignment, thereby mitigating the impact of noisy evaluations and preventing premature commitment to local optima.

(2) \textbf{Uncertainty Guided Exploration Policy}: We introduce a robust selection strategy governed by \textbf{Clade-Level Thompson Sampling}. By integrating prior stabilization with a budget-aware dynamic annealing, this policy effectively navigates the exploration-exploitation trade-off. It progressively shifts search focus from high-uncertainty clades to high-potential clades as the evaluation budget is exhausted.

\section{Related Work}

\subsection{LLM-based AHD}
\subsubsection{Mathematical Paradigm of AHD}

In addressing complex combinatorial optimization (CO) tasks $\mathcal{T}$, such as the Capacitated Vehicle Routing Problem (CVRP), the core objective of Automated Heuristic Design (AHD) is to precisely extract the globally optimal heuristic policy $\phi^{\star}$ from an expansive algorithmic space $\Phi$. This optimization objective can be formally defined as:
\begin{equation}
    \phi^{\star} = \arg\max_{\phi \in \Phi} \mathcal{E}(\phi).
\end{equation}
Formally, a candidate policy $\phi$ operates as a deterministic decision function, denoted by $\phi: X_{\mathcal{T}} \rightarrow Y_{\mathcal{T}}$. Consider the CVRP as a concrete example. The input state $X_{\mathcal{T}}$ comprises spatial coordinates and specific customer requirements. This function evaluates these variables to generate a valid delivery sequence $Y_{\mathcal{T}}$. Ultimately, this sequence must satisfy the physical load constraints of the vehicles.

Evaluating a policy $\phi$ requires a rigorously defined cost function $\mathcal{C}$, such as the total route length. Exact computation across the entire state space is intractable. Therefore, AHD frameworks rely on empirical sampling. By utilizing a specialized validation set $\mathcal{D}_{eval}$, we approximate the overall performance metric $\mathcal{E}(\phi)$, which is formulated as the negative expectation of the solution costs across all evaluated instances $x \in \mathcal{D}_{eval}$:
\begin{equation}
    \mathcal{E}(\phi) = \mathbb{E}_{x \in \mathcal{D}_{eval}}[-\mathcal{C}(\phi(x))].
\end{equation}

The complete policy space $\Phi$ is notoriously unstructured, rendering direct search methods largely intractable. To circumvent this bottleneck, modern AHD practices typically restrict the search scope within predefined algorithmic templates. This strategic confinement significantly streamlines the design process. Rather than synthesizing an entire solver from scratch, computational effort is exclusively directed toward optimizing the core scoring module $\phi$ that guides sequential state transitions.

\subsubsection{LLM-Driven Policy Evolution}

Recently, the exceptional code-generation capabilities of Large Language Models have been extensively integrated into the AHD domain to accelerate the evolution of $\phi^{\star}$ within the aforementioned frameworks. These LLM-driven AHD methods, exemplified by frameworks such as \textbf{ReEvo}~\cite{ye2024reevo} and its successor \textbf{HSEvo}~\cite{dat2025hsevo}, deeply fuse the principles of Evolutionary Computation (EC). Their fundamental mechanism involves maintaining an elite policy population $\{\phi_{1}, \dots, \phi_{K}\}$ of fixed size $K$. In each iteration, the system extracts high-performing heuristic code from the pool as context-aware prompts, inducing the LLM to recombine and generate novel algorithms $\phi$ through logic analogous to "crossover" or "mutation" in EC.

However, such population-based evolutionary frameworks expose significant limitations in exploration. The primary bottleneck stems from a rigid and myopic elitist selection strategy. Every new policy $\phi$ synthesized by the LLM undergoes immediate comparative evaluation. To survive, its performance score $\mathcal{E}(\phi)$ must strictly surpass the weakest individual in the current pool. We define this exact survival condition as $\mathcal{E}(\phi) > \min_{j \in \{1,\dots, K\}} \mathcal{E}(\phi_{j})$. The system automatically discards any candidate failing to satisfy this stringent requirement.

This greedy selection method only focuses on immediate results. As a result, the algorithm loses the flexibility to explore during early search stages. This short-term focus makes the process likely to get stuck early. The system simply rejects structural changes that cause a temporary drop in performance. However, these temporary drops are often necessary for finding the global optimum. Ultimately, this strict filtering process greatly limits the exploration of the heuristic space.

\subsection{LLMs as optimizers}
\subsubsection{Algorithm Generation Paradigm}

Recently, researchers have increasingly recognized that directly prompting generalist Large Language Models is insufficient for complex optimization tasks. In contrast, several studies explore another possibility: prompting LLMs directly for pieces of executable optimization programs. For example, Liu et al. proposed a general development platform, \textit{LLM4AD}~\cite{liu2024evolution}, to help practitioners in the related domain to learn, develop, and evaluate. 

However, unlike tasks such as language understanding, optimization problems are difficult for humans to solve without efficient algorithms, fundamentally challenging the LLMs' reasoning and generalization abilities. \textit{OptiMUS}, on the other hand, integrates hints about the target optimizer into the prompts to create and test new optimizers for numerical optimization problems. Because pre-trained models inherently lack domain specific expertise, fine tuning them to improve their ability to solve complex problems has become a critical yet underexplored research frontier~\cite{ma2026llamoco}.

\subsubsection{Iterative Black Box Optimization}

In parallel, recent methodologies have repositioned LLMs as black-box optimizers, prompting them to execute evolutionary operators—such as mutation, crossover, and selection—directly on candidate solutions. A prominent example is \textit{EvoPrompting}, which demonstrates remarkable efficiency in discovering state-of-the-art neural architectures. These frameworks typically maintain a population of solutions, using LLMs to iteratively refine them based on objective feedback and historical scores until a predefined stopping criterion is met. 

Extensive research has explored the use of LLMs for numerical optimization and automated optimizer discovery, with most approaches relying on prompt engineering and few-shot learning~\cite{Hutter2019, Feurer2015}. Forniés-Tabuenca, Latorre, and Ramos introduced a reflection-based self-refinement mechanism into the iterative prompting process, allowing the model to produce better optimization programs over time. This line of work is promising, but its efficiency is often limited by the large number of iterations required. Huang, Wang, and colleagues also released a comprehensive set of prompting strategies and highlighted the importance of the repair process in LLM-generated programs.

\subsubsection{Fine-Tuning in Evolutionary Design}

An emerging paradigm integrates LLMs into the field of Evolutionary Computation (EC) for automated algorithm design, employing these models to search for and evolve executable optimization scripts~\cite{van2024llamea}. By evolving the programs of constructive methods, the finally obtained optimization program could beat certain human-crafted heuristics. While prior studies have explored instruction-tuning for operations research problems, low-cost LLM-aided multi-objective optimization, and the use of LLMs as surrogate models in Evolutionary Algorithms (EAs), these efforts have been limited in scope. 

They have focused on discrete domains or narrow component-level assistance, rather than the end-to-end generation of complete optimizers for the continuous domain. This highlights the contribution of \textit{LLaMoCo}~\cite{ma2026llamoco}. It is a pioneering framework for tuning LLMs for this task and addresses both data preparation and training enhancement. To streamline algorithm design, Zhong et al. proposed a one-shot prompting strategy based on the CRISPE framework~\cite{zhong2024leveraging}. Their method successfully synthesizes novel metaheuristics without requiring iterative dialogue. Unlike existing evolutionary optimizers, the generated algorithm demonstrates unique search behaviors. Ultimately, this work highlights the immense potential of LLMs as independent algorithm designers.

Previous works often rely strictly on standard LLM textual mutations. \textit{LLaMEA}~\cite{van2024llamea} identifies this reliance as a key bottleneck. To address this limitation, LLaMEA integrates an evolutionary loop with LLM-based mutation, enabling the generation of adaptive algorithm scripts without explicit length constraints. Consequently, their method can generate adaptive algorithm scripts without length constraints. This direction builds naturally upon the foundational successes of \textit{FunSearch}~\cite{romera2024mathematical} and \textit{EoH}~\cite{liu2024evolution}. Overall, these studies underscore the growing interest in leveraging LLMs for iterative heuristic program discovery, as reflected in a rapidly expanding body of recent survey and methodological work~\cite{bartz2014hyper}.

\section{Preliminary}

This section provides a comprehensive theoretical grounding for the Clade-AHD framework, situating our contributions within the broader context of evolutionary computation, biological clade selection, and Bayesian reinforcement learning.

\subsection{From Node-Centric to Clade-Centric Search}
Traditional MCTS algorithms, such as UCT \cite{kocsis2006bandit}, operate under the assumption that nodes function as independent bandits. This framework is theoretically sound in domains that permit abundant sampling budgets like board games~\cite{Silver2016}. Yet, this independence assumption disintegrates in the context of Automatic Heuristic Design (AHD). In AHD, a heuristic is not an isolated entity but a refined continuation of its ancestral logic.

We adopt the concept of \textit{Clade Selection} from evolutionary biology to formalize this dependency. As argued by Okasha \cite{okasha2003concept} in the philosophy of biology, strictly individual-level selection fails to account for traits that manifest only at the level of a lineage. Analogously, in our code evolution tree, a specific mutation might yield a statistically insignificant reward in isolation (due to evaluation noise) yet possess the structural potential to spawn a high-performing sub-tree. Clade-AHD thus shifts the unit of selection from the individual code node to the \textit{clade}. We formally define a clade as an ancestor node alongside all of its lineal descendants. This expanded scope naturally captures the \textit{meta-productivity} of an entire evolutionary branch, providing a far more reliable evaluation metric than the transient fitness of a single, isolated code snippet.

\subsection{Bayesian Inference in Sparse Evaluation Environments}
The core inefficiency of standard AHD methods lies in their reliance on frequentist estimation. The Law of Large Numbers underpins the asymptotic convergence of mean-based estimators like those in UCT. However, this statistical law is effectively suspended in LLM-driven optimization due to the prohibitive cost of heuristic evaluation. When the node visit count remains critically small (often below a strict threshold $N<5$), scalar point estimates become overwhelmed by variance. This severe limitation reduces the structured tree search to an unguided stochastic walk.

To address this, Clade-AHD leverages Bayesian inference to model explicit epistemic reward uncertainty. Building on the seminal work of Tesauro et al. \cite{tesauro2012bayesian}, we model scalar evaluations as probabilistic reward distributions. Unlike deterministic Upper Confidence Bounds, which can be overly optimistic or pessimistic in sparse regimes, our use of Thompson Sampling \cite{bai2013bayesian} enables \textit{probability matching} over these reward models. This ensures that the exploration effort is proportional to the probability that a clade delivers the optimal expected reward, a property that has been theoretically proven to minimize regret in bandit settings with delayed or sparse feedback \cite{chapelle2011empirical}.


\subsection{LLM-driven Heuristic Design}

Automating algorithm design with computational intelligence is a long-standing research area, with origins in AutoML~\cite{Hutter2019, Feurer2015,song2024efficient}, and GP~\cite{Koza1992,liao2024learning}. The advent of LLMs, with their remarkable capabilities in code understanding, generation, and reasoning~\cite{Bubeck2023}, has injected unprecedented vitality into this field. Currently, LLM-driven AHD is dominated by two technical paradigms, each making a different trade-off between exploration and exploitation.

 Population-based methods currently dominate the field of LLM-driven AHD. These methods maintain a pool of high-quality heuristics and iteratively optimize them using evolutionary principles. \textbf{Funsearch}~\cite{romera2024mathematical} established this paradigm by employing an LLM as a mutator to discover novel mathematical solutions. \textbf{EOH}~\cite{liu2024evolution} later advanced this framework with sophisticated genetic operators and maintenance strategies, showing strong performance on combinatorial tasks. However, EOH suffers from a critical structural flaw: its reliance on greedy, fitness-based selection. To strictly enforce a fixed population limit, heuristics that are temporarily underperforming but possess high evolutionary potential are prematurely culled. While this strategy accelerates convergence, it sacrifices population diversity, trapping the search in local optima. To overcome the local optima problem inherent in population-based methods, researchers have explored strategies with a more global perspective. Inspired by the success of \textbf{AlphaGo}~\cite{Silver2016} in navigating vast game spaces, this method introduces MCTS to the heuristic evolution process. MCTS-AHD~\cite{Zheng2025} organizes all generated heuristics in a continuously growing search tree, using the UCT algorithm to balance exploitation and exploration. By retaining all historical information, this paradigm successfully mitigates the short-sightedness of population-based methods and significantly improves the quality of the discovered heuristics. Nevertheless, MCTS-AHD~\cite{Zheng2025} is structurally prone to over-exploitation under limited evaluation budgets. Its reliance on point estimates without sufficient sampling makes the search policy susceptible to stochastic noise. This frequently leads to premature convergence, where the algorithm wastes resources optimizing local optima rather than maintaining effective global exploration.

\subsection{Limitations} 

\paragraph{Limitation of Standard MCTS in AHD Scenario}
MCTS has achieved remarkable success in sequential decision-making problems, particularly in game playing as exemplified by AlphaGo~\cite{Silver2016}. The algorithm typically operates through four iterative phases: Selection, Expansion, Simulation, and Backpropagation. During the Selection phase, standard MCTS relies on the Upper Confidence Bound for Trees (UCT)~\cite{kocsis2006bandit} to balance exploration and exploitation. The UCT policy selects a child node $a$ of node $v$ according to:
\begin{equation}
\text{UCT}(a,v) = \textbf{}{Q}(a,v) + C \sqrt{\frac{\ln (N(v)+1)}{N(a)}},
\end{equation}
where $\textbf{}{Q}(a,v)$ represents the newly expended leaf nodes for their quality value, $N(v)$ and $N(a)$ denote the visit counts, and $C$ is an exploration constant.


While UCT theoretically guarantees asymptotic convergence by treating nodes as independent bandits, this fundamental premise disintegrates in LLM-based AHD~\cite{tesauro2012bayesian}. The prohibitive computational cost of heuristic generation and evaluation forces the search tree into a regime of extreme visit sparsity (commonly $N \le 3$). Under such stringent constraints, the reliance on Hoeffding-based confidence bounds causes a structural failure of the search policy. We formalize this fundamental incompatibility through three propositions.

\paragraph{Vacuousness of the Hoeffding Bound.}
The standard UCT exploration bonus, proportional to $\sqrt{\ln N(v)/N(a)}$, is derived from Hoeffding's inequality. For rewards bounded in $[0,1]$, it assumes that empirical means reliably track true expectations. However, this bound entirely neglects empirical variance, unlike tighter alternatives that remain sensitive to it.

\begin{proposition}\label{prop:variance}
Let sibling nodes $a$ and $b$ have equal sparse visit counts $N(a)=N(b)=k \le 3$ and identical empirical means $\bar{Q}(a,v)=\bar{Q}(b,v)$. Then $\mathrm{UCT}(a,v)=\mathrm{UCT}(b,v)$ is unconditionally true, completely disregarding any disparity in their empirical variances.
\end{proposition}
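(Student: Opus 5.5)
The argument is a direct substitution into the UCT formula, followed by the observation that variance never appears as an argument of the map being evaluated. We write the selection score of a child $a$ of $v$ as
\begin{equation*}
\mathrm{UCT}(a,v) = F\bigl(\bar{Q}(a,v), N(a), N(v)\bigr), \qquad F(q,n,m) = q + C\sqrt{\frac{\ln(m+1)}{n}},
\end{equation*}
where $C$ is the shared exploration constant. The quantity $Q(a,v)$ in the UCT formula is read as the empirical mean $\bar{Q}(a,v)$ of the rewards backpropagated to $a$, which is how the statement uses it.

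The first step is to confirm that $F$ is a well-defined deterministic function of exactly these three arguments. Here $N(a)=k\ge 1$, so the denominator is nonzero, and $\ln(N(v)+1)>0$ because $N(v)\ge 1$. The second step is to note that $a$ and $b$ are siblings, so they share the parent $v$ and hence the same $N(v)$. Together with the hypotheses $N(a)=N(b)=k$ and $\bar{Q}(a,v)=\bar{Q}(b,v)$, this makes all three arguments of $F$ identical for the two nodes. Equality of the outputs, $\mathrm{UCT}(a,v)=\mathrm{UCT}(b,v)$, then follows.

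The third step makes the phrase ``disregarding any disparity in their empirical variances'' precise. Let $\{r_i^{a}\}_{i=1}^{k}$ and $\{r_i^{b}\}_{i=1}^{k}$ be the reward samples of the two nodes. The score depends on these samples only through their sum (via $\bar{Q}$) and their count $k$. Any two sample multisets with equal size and equal mean therefore receive equal scores. To show this is not vacuous, we exhibit an explicit pair with different variances, for example with $k=2$: $r^{a}=(0.5,0.5)$ with variance $0$, and $r^{b}=(0,1)$ with variance $0.25$. Both have mean $0.5$, yet UCT assigns them the same score. This shows that $\mathrm{UCT}$ is constant on level sets of (mean, count) regardless of the second moment. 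It contrasts with variance-aware bounds such as UCB-V, which would separate these two nodes.

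There is no real obstacle here. The only point needing care is the interpretive one of fixing what $Q(a,v)$ denotes, since the paper's wording describes it loosely. We therefore state explicitly that it is the empirical mean reward. The restriction $k\le 3$ plays no role in the equality itself; it only motivates why the ignored variance matters in the sparse regime.
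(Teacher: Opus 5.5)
Your proof is correct and follows the paper's own argument. Direct substitution makes the exploration terms coincide, so the scores differ only by $\bar{Q}(a,v)-\bar{Q}(b,v)=0$; you also state explicitly that siblings share $N(v)$, which the paper leaves implicit. Your equal-mean, unequal-variance example plays the same role as the paper's $[0.4,0.4,0.4]$ versus $[0.0,1.0,0.2]$ illustration, and your remark that $k\le 3$ is irrelevant to the equality itself is accurate.
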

\begin{proof}
By directly substituting the assumed conditions into the UCT formulation, the exploration terms are identical, yielding $\mathrm{UCT}(a,v) - \mathrm{UCT}(b,v) = \bar{Q}(a,v) - \bar{Q}(b,v) = 0$. This insensitivity to higher moments is particularly consequential in LLM-based AHD, where variance proxies evolutionary potential. A stable but mediocre mutation with normalized returns $[0.4, 0.4, 0.4]$ receives the exact same UCT priority as a high-risk and high-reward mutation with returns $[0.0, 1.0, 0.2]$ (mean $0.4$). UCT inherently fails to distinguish between barren stability and high-variance evolutionary potential.
\end{proof}

\begin{proposition}\label{prop:collapse}
For a node with $n$ evaluations, achieving a statistical confidence level $1-\delta$ under Hoeffding's inequality requires an interval half-width of $w(n,\delta)=\sqrt{\ln(2/\delta)/(2n)}$.  Under the extreme sparsity of AHD (e.g., $n \le 3$), establishing even a relaxed $95\%$ confidence ($\delta=0.05$) yields:
\begin{align*}
w(1,0.05)&\approx 1.358,\\
w(2,0.05)&\approx 0.960,\\
w(3,0.05)&\approx 0.784.
\end{align*}
Because the normalized metric space of AHD is strictly bounded within $[0,1]$, an interval half-width $w \ge 0.5$ causes the Upper Confidence Bound (UCB) to exceed the maximum theoretically possible reward ($1.0$). For $n \le 3$, the Hoeffding guarantee collapses into a mathematically vacuous statement, failing to provide any informative statistical upper bound.
\end{proposition}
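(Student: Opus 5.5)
The argument splits into three steps: derive the half-width from Hoeffding's inequality, evaluate it at $n=1,2,3$, and then convert the numbers into the vacuity claim.

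\textbf{Step 1: the half-width.} We take the $n$ evaluations $X_1,\dots,X_n$ of a node to be independent with values in $[0,1]$, and write $\mu$ for their common mean and $\bar X_n$ for the empirical mean. The two-sided Hoeffding inequality gives
\begin{equation*}
\Pr\bigl(|\bar X_n-\mu|\ge w\bigr)\le 2\exp(-2nw^2).
\end{equation*}
Setting the right-hand side equal to $\delta$ and solving for $w$ yields $w(n,\delta)=\sqrt{\ln(2/\delta)/(2n)}$, which is the formula in the statement. This is the smallest half-width that Hoeffding's inequality certifies at confidence $1-\delta$ without any further distributional information.

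\textbf{Step 2: the numbers.} For $\delta=0.05$ we have $\ln(2/\delta)=\ln 40\approx 3.689$. Dividing by $2n$ and taking square roots gives:
\begin{itemize}
\item for $n=1$: $\sqrt{1.844}\approx 1.358$;
\item for $n=2$: $\sqrt{0.922}\approx 0.960$;
\item for $n=3$: $\sqrt{0.615}\approx 0.784$.
\end{itemize}
More generally, $w(n,0.05)\ge 0.5$ holds exactly when $n\le \ln 40/(2\cdot 0.25)\approx 7.38$. So the vacuity threshold covers every $n\le 7$, which contains the sparse regime $n\le 3$.

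\textbf{Step 3: vacuity.} The UCB is $\bar X_n+w$. The statement as written suggests that $w\ge 0.5$ alone forces $\bar X_n+w>1$, but that only holds when $\bar X_n\ge 1-w$. Making this step precise is the one real obstacle, and I would handle it in two parts.
\begin{itemize}
\item For $n=1$ (and for $n=2$ once $\bar X_n\ge 0.04$), we have $w\ge 1$ or close to it, so $[\bar X_n-w,\bar X_n+w]\supseteq[0,1]$ for essentially every empirical mean. The interval then carries no information beyond the a priori range.
\item For $n=3$, the interval has length $2w\ge 1$, which is at least the length of the whole reward range $[0,1]$. Its clipped version $[\bar X_n-w,\bar X_n+w]\cap[0,1]$ therefore always contains an endpoint of $[0,1]$. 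Concretely, the upper bound is truncated to $1$ whenever $\bar X_n\ge 1-w\approx 0.216$, and it is uninformative relative to the trivial bound as soon as the empirical mean is moderate.
\end{itemize}
I would state this reading explicitly, namely that a total width of at least $1$ exceeds the whole normalized range, and that the upper bound exceeds $1.0$ for all empirical means above $1-w$. This justifies the claim that the guarantee collapses, without overstating it as holding for every $\bar X_n$.
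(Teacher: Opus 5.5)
Your proposal is correct and follows the same route as the paper, whose argument is contained in the statement itself: take the Hoeffding half-width $w(n,\delta)$, evaluate it at $\delta=0.05$ for $n=1,2,3$, and compare $w$ with the unit reward range. Your Step 3 is more careful than the paper, which asserts that $w\ge 0.5$ alone pushes the UCB above $1$; as you point out, this needs $\bar X_n\ge 1-w$ (for $n=3$ and $\bar X_n=0$ the UCB is $0.784<1$), so your uniform reformulation (since $2w\ge 1$, the clipped interval always reaches $0$ or $1$) is the right way to state the collapse, with the small caveat that for $n=2$ full containment of $[0,1]$ also requires $\bar X_n\le 0.96$, not just $\bar X_n\ge 0.04$.
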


\begin{proposition}\label{prop:ctuning}
The structural flaws of UCT under extreme sparsity cannot be circumvented by merely tuning the exploration constant $C$.
(i) If $N(i)=N(j)$, the UCT difference is independent of $C$ (Proposition~\ref{prop:variance}).
(ii) If $N(i)<N(j) \le 3$, as $C \to \infty$, the policy degenerates: $\lim_{C\to\infty}\arg\max_k \mathrm{UCT}(k,v) = \arg\min_k N(k)$.
\end{proposition}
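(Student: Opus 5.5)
Both parts follow directly from the UCT formula, writing $\mathrm{UCT}(k,v)=\bar{Q}(k,v)+C\,B(k)$ with the exploration bonus $B(k)=\sqrt{\ln(N(v)+1)/N(k)}$, which depends only on the shared parent count $N(v)$ and the child count $N(k)$. The plan is to treat the two parts separately, since (i) is a pure cancellation and (ii) is a limit over a finite set of children.

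For (i), I will take siblings $i,j$ with $N(i)=N(j)$. Then $B(i)=B(j)$, so
$$\mathrm{UCT}(i,v)-\mathrm{UCT}(j,v)=\bar{Q}(i,v)-\bar{Q}(j,v),$$
and this difference contains no $C$. Hence no value of $C$ can change the relative order of $i$ and $j$. In particular, the variance blindness of Proposition~\ref{prop:variance} holds for every $C$: when the means also coincide, the tie persists regardless of $C$.

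For (ii), I will first note that $B(k)$ is strictly decreasing in $N(k)$ as long as $\ln(N(v)+1)>0$, which holds because $N(v)\ge 1$ once $v$ has been visited. So any child $i$ with minimal visit count has the strictly largest bonus among children not sharing that count. For such an $i$ and any $j$ with $N(j)>N(i)$,
$$\mathrm{UCT}(i,v)-\mathrm{UCT}(j,v)=\bigl(\bar{Q}(i,v)-\bar{Q}(j,v)\bigr)+C\bigl(B(i)-B(j)\bigr).$$
The mean term is bounded in $[-1,1]$ because rewards are normalized, while $B(i)-B(j)>0$. Therefore, once
$$C>\frac{1}{B(i)-B(j)},$$
the difference is positive. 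There are finitely many children, so I can take $C^\ast$ to be the maximum of these thresholds. For all $C>C^\ast$, the argmax lies in $\arg\min_k N(k)$, and the limit statement follows (with ties among minimal-count children broken by $\bar{Q}$, which is exactly the situation covered by part (i)).

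The only delicate point is stating the limit of an argmax rigorously. I will interpret it as eventual set inclusion for all sufficiently large $C$, rather than as a limit of a real-valued function.

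Finally, I will explain why the hypothesis $N\le 3$ makes this degeneration bite in practice. In the sparse regime the gaps $B(i)-B(j)$ are large: for example, $1-1/\sqrt{2}$ times $\sqrt{\ln(N(v)+1)}$ when comparing counts $1$ and $2$. So only a moderate $C$ is needed before the policy becomes pure visit-count balancing. Together, (i) and (ii) show that tuning $C$ only moves between variance-blind exploitation and count-driven uniform exploration.
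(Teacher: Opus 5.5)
Your proposal is correct and follows the same idea as the paper: part (i) is the cancellation of the identical exploration bonuses, and part (ii) is the observation that the $C$-scaled bonus eventually dominates the bounded mean term. The only difference is presentational. The paper argues through the ratio $\mathrm{UCT}(i,v)/\mathrm{UCT}(j,v)\to\sqrt{N(j)/N(i)}>1$, whereas your difference-with-threshold argument gives an explicit $C^\ast$ and makes the argmax limit precise as the eventual inclusion $\arg\max_k \mathrm{UCT}(k,v)\subseteq\arg\min_k N(k)$, with the choice among equally least-visited children still decided by the empirical means, as in part (i).
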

\begin{proof}
Follows directly from the cancellation of the exploration terms. (ii) As $C$ dominates, the empirical mean $\bar{Q}$ becomes negligible. The UCT ratio between nodes $i$ and $j$ converges to $\sqrt{N(j)/N(i)} > 1$. Hence inflating $C$ does not induce variance-aware sampling. Instead, it forces the search policy into a deterministic and uninformed round-robin over the least-visited nodes, indiscriminately wasting the evaluation budget on structurally broken heuristic branches.
\end{proof}

Empirical evaluations (see Table~\ref{tab:ablation_exploration}) demonstrate that simply tuning the exploration constant $C$ is ineffective in AHD; indiscriminately increasing $C$ merely triggers a non-productive random walk over low-visit nodes. This is because $C \sqrt{\ln N/n_i}$ becomes mathematically vacuous when $n_i \le 3$, failing to distinguish between stable mediocrity and high-potential exploratory mutations~\cite{tesauro2012bayesian,misaki2025wider}.

\paragraph{Limitations of Node-Centric Bayesian MCTS} 
To address sample inefficiency, prior works have integrated Bayesian inference into the search framework. Moving beyond scalar estimates, methods such as Bayes-UCT~\cite{tesauro2012bayesian} and Thompson Sampling MCTS~\cite{bai2013bayesian} model potential as probability distributions. This allows the policy to account for epistemic uncertainty, prioritizing actions with high information gain. However, these approaches remain strictly \textbf{node-centric}. They operate on the assumption that a state's value is intrinsic and independent of its siblings, which is a flawed assumption in heuristic design, where an intermediate code fragment, acting as a stepping stone, may perform poorly in isolation yet enable superior descendants.

\section{The Proposed CLADE-AHD}
\begin{figure*}[t] 
    \centering
    \includegraphics[width=0.95\textwidth]{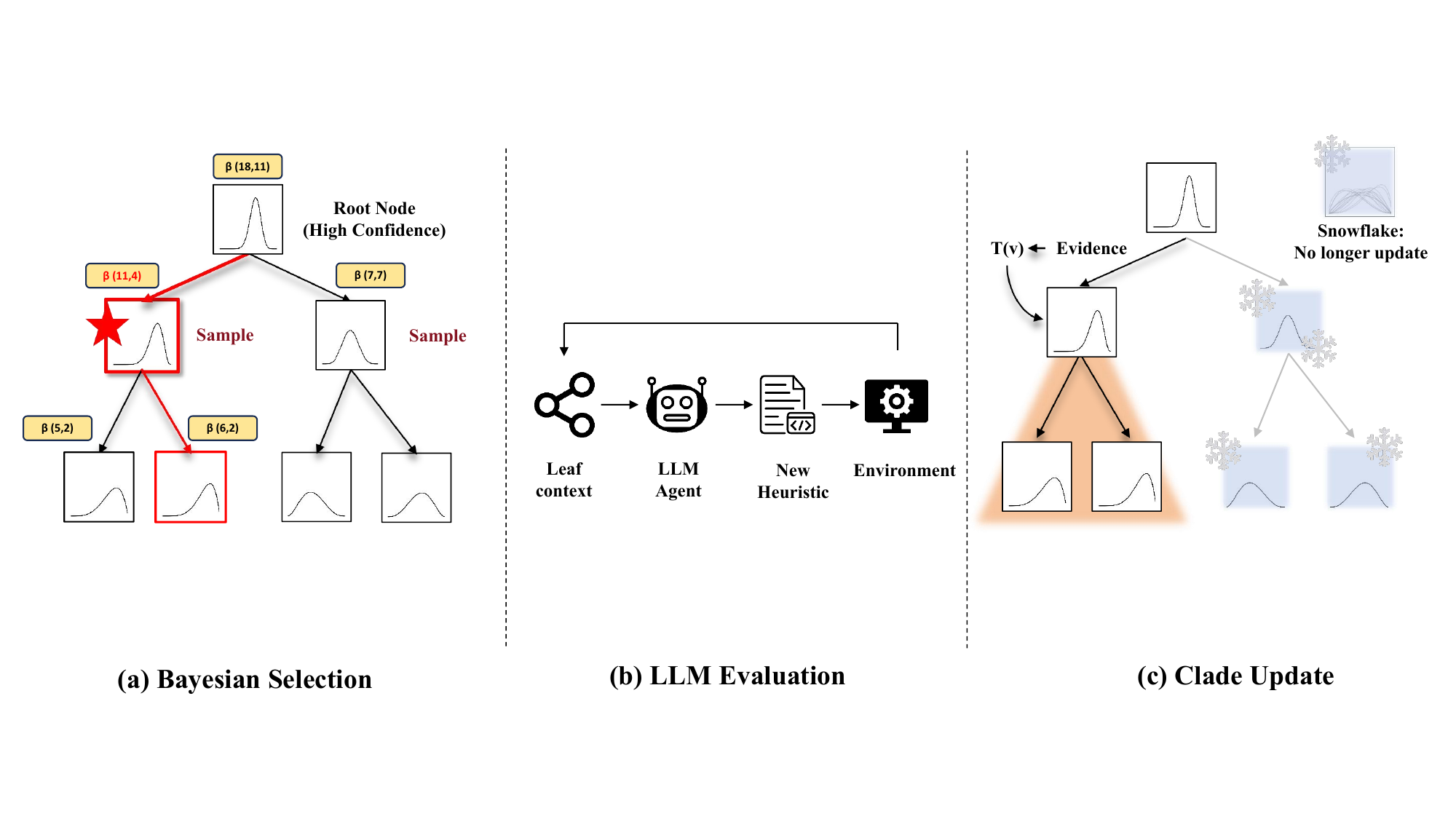} 
    \caption{
        The schematic overview of the Clade-AHD framework. 
        (a) \textbf{Bayesian Selection}: The policy navigates the tree via Clade-level Thompson Sampling, selecting nodes based on belief distributions. 
        (b) \textbf{LLM Evaluation}: The LLM generates and evaluates new heuristics based on the leaf context. 
        (c) \textbf{Clade Update}: Evidence propagates bottom-up to refine ancestral beliefs (sharper curves), while the \textit{Dynamic Clade Freezing} mechanism (snowflake icons) prunes suboptimal branches.
    }
    \label{fig:framework_overview}
\end{figure*}


We introduce Clade-AHD, a framework that shifts the evaluation paradigm from node-level scalar estimation to Clade-level distributional belief. By modeling the potential of each search branch as a Beta distribution and propagating evidence bottom-up, we enable the system to identify transitional nodes that may appear modest in isolation but act as stepping stones to high-value evolutionary descendants. The framework consists of five integrated components:

\subsection{Bayesian Abstraction of Search Space}

Driven by the inherent unreliability of point estimates in sparse search spaces, we suggest that heuristic evolution in MCTS is better understood as a Clade-level potential inference rather than a node-level point estimation. An individual node's value is transient and noisy, whereas the clade represents a robust, evolving belief state.

\paragraph{Definition 1: Clade}
We formally introduce the concept of a \textbf{clade} to capture the structural correlation of heuristics. Borrowing from biological taxonomy, a clade represents an ancestor and all its evolutionary descendants. Mathematically, for a node $v$ in the current search tree $\mathbb{T}$, we define its clade $\mathcal{T}(v)$ as the set containing $v$ all its progeny: \begin{equation}
    \mathcal{T}(v) = \{v\} \cup \{u \in \mathbb{T} \mid v \text{ is an ancestor of } u\}.
\end{equation} This set definition is crucial because it delineates the scope of evidence aggregation. In our framework, a node $v$ is not judged solely by its immediate fitness, but serves as the belief holder for the entire set $\mathcal{T}(v)$. This allows us to capture the value of high-potential heuristics, which are nodes that may perform poorly in isolation but possess the structural potential to generate superior descendants.
\paragraph{Definition 2: Beta Distribution}
To quantify the potential of this structural unit, we map the fitness scores of each clade to a probabilistic belief space. While existing literature often models value distributions using Gaussian mean-variance approaches, such assumptions may become unreliable under extremely sparse observations. Gaussian empirical variance collapses under extreme sparsity (e.g., undefined for $N=1$) and its unbounded support $(-\infty, +\infty)$ permits physically impossible heuristic evaluations. In contrast, by normalizing continuous objectives to $[0,1]$, our framework leverages the Beta distribution as a conjugate prior. This ensures that the parameters $\alpha_v$ and $\beta_v$ carry precise physical meaning as fractional pseudo-counts of positive and negative evolutionary evidence, explicitly and robustly modeling epistemic uncertainty even with zero or one evaluation. We model the node's potential as a random variable following a Beta distribution $\mathcal{B}(\alpha_v, \beta_v)$\cite{beta1}, which serves as the conjugate prior for the Bernoulli likelihood of heuristic evaluations. The probability density function is defined as:
\begin{equation}
    f(\theta; \alpha_v, \beta_v) = \frac{\theta^{\alpha_v - 1} (1 - \theta)^{\beta_v - 1}}{B(\alpha_v, \beta_v)},
\end{equation}
where $B(\cdot)$ represents the Beta function serving as a normalization constant, and we do not treat \textit{v} merely as the visitation state of an individual node, but as the current statistical representative of its entire evolutionary clade $\mathcal{T}(v)$. This Bayesian abstraction allows us to represent the complete distribution of a heuristic's performance, capturing both its expected quality and the associated uncertainty. Formally, the belief state of a node $v$ is parameterized by the counts of successes $\alpha_v$ and failures $\beta_v$, with the expectation and variance given by:\begin{equation}\mathbb{E}[\theta_v] = \frac{\alpha_v}{\alpha_v + \beta_v}, \qquad\text{Var}[\theta_v] = \frac{\alpha_v \beta_v}{(\alpha_v+\beta_v)^2(\alpha_v+\beta_v+1)}.\end{equation}

Strictly speaking, the Beta distribution is conjugate to the binomial likelihood and assumes discrete binary trials. A standard theoretically rigorous approach to handle continuous rewards $s \in [0,1]$ in bandit settings is to perform Bernoulli sampling: treating $s$ as the probability of observing a discrete success. However, we deliberately eschew this approach in favor of fractional pseudo-counts. In the context of LLM-based AHD, heuristic evaluation is computationally exorbitant, leading to extreme evaluation sparsity (often $N \le 3$). Under such stringent constraints, introducing an additional layer of Bernoulli sampling noise would cause catastrophic variance in belief estimation. Therefore, directly treating the normalized continuous scores as fractional updates acts as a deterministic expectation update. This serves as a vital variance-reduction modeling choice, ensuring stable probabilistic beliefs without exceeding the physical boundaries of the performance metrics.

\subsection{Clade Belief Updates}
Our belief update mechanism is inspired by the \textbf{Clade-Metaproductivity (CMP)} framework proposed in the Huxley-Gödel Machine\cite{wang2025huxley}. HGM pioneered the use of clade-based statistics in tree search, demonstrating that aggregating descendant evaluations yields more robust estimates than isolated node values. 

However, standard CMP treats all descendants equally, ignoring the temporal distance between an ancestor and its distant offspring. This is problematic in heuristic design, where accumulative semantic drift inevitably weakens the correlation between an ancestor and its deep descendants. As the evolutionary path lengthens, the performance of a node becomes increasingly attributable to recent mutations rather than the ancestor's initial logic. Consequently, uniform aggregation is prone to credit misassignment, permitting uncorrelated variations from distant offspring to induce erroneous value estimates for their ancestors. 

To address this issue, we propose a \textbf{Clade-aware Belief Update} mechanism that aggregates performance evidence from the entire clade $\mathcal{T}(v)$ rooted at $v$. Unlike traditional MCTS, which typically relies on node-local statistics, our approach establishes a \textbf{Bottom-up Evidence Propagation}. Similar to how reliable value estimation requires grounding in verifiable outcomes, we ground the belief of an ancestor node in the accumulated evidence of its entire clade. This propagates the discovery of high-quality descendants back to the root, effectively improving the evaluation signal for the entire branch\cite{updategood}. Formally, the clade-level belief parameters $\alpha_{\mathcal{T}}(v)$ and $\beta_{\mathcal{T}}(v)$ are derived by aggregating evidence from all descendants $u \in \mathcal{T}(v)$, weighted by a decay factor $\lambda \in (0, 1]$ to account for temporal credit assignment:
\begin{align}\alpha_{\mathcal{T}}(v) &= 1 + \sum_{u \in \mathcal{T}(v)} \lambda^{\text{dist}(v,u)} (\alpha_u - 1),\end{align}
\begin{align}\beta_{\mathcal{T}}(v)  &= 1 + \sum_{u \in \mathcal{T}(v)} \lambda^{\text{dist}(v,u)} (\beta_u  - 1).\end{align}
where $\text{dist}(v,u)$ represents the depth difference between node $v$ and $u$. Unlike the temporal discount factor $\gamma$ in standard Reinforcement Learning (e.g., MuZero~\cite{schrittwieser2020mastering}), our decay factor $\lambda$ addresses the phenomenon of \textit{semantic drift}. As LLMs mutate code across generations, a distant descendant may become conceptually decoupled from its ancestor's logic. By applying $\lambda$ directly to historical evidence, we prevent uncorrelated variations from deep branches from disproportionately corrupting ancestral value estimation, thereby ensuring that the backpropagation remains evolutionary coherent~\cite{tesauro2012bayesian,misaki2025wider}. This depth-attenuated aggregation ensures that the search is driven by the branch's long-term potential, while $\lambda$ prevents deep, specific variations from overly dominating the search direction of ancestor nodes. Crucially, as more descendants are explored, the accumulated counts ($\alpha_{\mathcal{T}}, \beta_{\mathcal{T}}$) increase, naturally reducing the variance of the belief distribution and yielding a progressively more confident estimation of the clade's true value. 

\subsection{Clade-level Thompson Sampling}
We govern the search trajectory via \textit{Clade-level Thompson Sampling}. Unlike rigid deterministic policies, this strategy naturally balances exploration and exploitation by sampling from the posterior belief of each clade. First, to mitigate estimator variance in sparsely visited regions, we regularize the aggregated clade evidence using expectation-based pseudo-counts \cite{landa}. Let $\alpha_{\mathcal{T}}(v)$ and $\beta_{\mathcal{T}}(v)$ be the aggregated evidence for the clade rooted at $v$. The baseline expected potential is defined as $\mathbb{E}[\theta_{\mathcal{T}}(v)] = \alpha_{\mathcal{T}}(v) / (\alpha_{\mathcal{T}}(v) + \beta_{\mathcal{T}}(v))$. This acts as a \textbf{Prior Stabilization} mechanism to bound initial variance:
\begin{align}
\tilde{\alpha}_{\mathcal{T}}(v) &= \alpha_{\mathcal{T}}(v) + n_{\text{pseudo}} \cdot \mathbb{E}[\theta_{\mathcal{T}}(v)], \\
\tilde{\beta}_{\mathcal{T}}(v) &= \beta_{\mathcal{T}}(v) + n_{\text{pseudo}} \cdot (1 - \mathbb{E}[\theta_{\mathcal{T}}(v)]).
\end{align}

However, standard Thompson Sampling implicitly assumes an infinite time horizon. To reconcile this with the strict computational constraints of LLM-based design, we introduce a \textbf{Budget-Aware Annealing} mechanism. We modulate the precision of our beliefs via a temperature parameter $\tau(p)$ that evolves with search progress $p = t/T_{\max}$:
$$ \tau(p) = \left(\frac{1}{1-p}\right)^{\omega}, $$
where $\omega$ controls the annealing rate. The final selection policy operates via a Stochastic Argmax over independent belief realizations. For each candidate child $v$, a scalar sample $\hat{\theta}_v$ is independently drawn using the scaled posterior:
$$ \hat{\theta}_v \sim \mathcal{B}(\tilde{\alpha}_{\mathcal{T}}(v) \cdot \tau(p), \tilde{\beta}_{\mathcal{T}}(v) \cdot \tau(p)). $$
Scaling the Beta parameters by $\tau(p) \ge 1$ keeps the expected mean unchanged but strictly shrinks the variance ($\text{Var}[\theta] \propto 1/\tau(p)$). This systematically drives the policy from exploratory sampling to decisive exploitation as resource exhaustion approaches ($p \to 1$). The winner of this stochastic competition determines the subsequent search node:
\begin{equation}v_{next} = \operatorname*{argmax}{v \in \text{children}} \ \hat{\theta}_v.\end{equation}
This formulation ensures that the search explores diverse possibilities in the early stages (high variance) and asymptotically converges to the most promising solution (zero variance) as $p \to 1$.

This hierarchical formulation natively establishes a spatial equilibrium: shallow root nodes accumulate massive evidence from entire clades, driving their parameters toward deterministic exploitation. Conversely, deep frontier nodes aggregate evidence from sparse local subtrees, and the $\lambda$ factor isolates them from distant signals, preserving high variance to compel the exploration of novel semantic mutations~\cite{tesauro2012bayesian}. Empirical depth statistics (see Table~\ref{tab:depth_dynamics}) confirm that our mechanism maintains an adaptive preference structure even as the search tree grows deeper.

\subsection{Dynamic Clade Freezing}
To maximize the utility of the computational budget, we implement a \textbf{Dynamic Clade Freezing} mechanism. This strategy proactively prunes the search tree by identifying and halting the expansion of sub-optimal clades, thereby concentrating resources on regions with higher discovery potential. Formally, a clade rooted at $v$ is frozen if its aggregated performance demonstrates a significant gap relative to the global optimum:\begin{equation}\text{freeze}(v) \iff \text{visits}(v) \geq V_{\min} \,\ and \,\ \  \mu_{\mathcal{T}}(v) < \gamma \cdot \mu_{\text{global}}^{\star},\end{equation}where $\mu_{\mathcal{T}}(v)$ denotes the expected performance of the clade (computed as the simple average of all descendant normalized scores), and $\mu_{\text{global}}^{\star}$ represents the global maximum normalized score identified by the best heuristic found so far. $\gamma \in (0,1)$ serves as a strictness threshold.

To systematically prevent the premature freezing of high-variance, high-potential clades due to stochastic evaluation noise, this mechanism enforces two structural safeguards:
(1) \textbf{Statistical Grounding:} The criterion is exclusively evaluated if the node has accumulated sufficient historical visits ($\text{visits}(v) \geq V_{\min}$).
(2) \textbf{Clade-level Aggregation:} By comparing the global best against the aggregated $\mu_{\mathcal{T}}(v)$ rather than an isolated node score, the policy prevents the pruning of a promising lineage merely due to a few transiently poor, exploratory mutations. This acts as a robust filter, maximizing budget utility without sacrificing evolutionary diversity.

\subsection{Overall Workflow of the Proposed Algorithm}

Algorithm~\ref{alg:clade_ahd} formalizes the complete execution pipeline of Clade-AHD. The search process initializes the root node using a seed heuristic configured with uniform Bayesian priors. In each iteration—until the evaluation budget is exhausted—the framework systematically orchestrates five synergistic phases to navigate the vast heuristic space. \textbf{Phase 1 (Selection)} traverses the search tree strictly from the root by capitalizing on clade-level belief aggregation and budget-aware annealing. This mechanism naturally balances the exploration of novel regions with the exploitation of historically promising subtrees via guided Thompson Sampling. \textbf{Phase 2 (LLM-driven Expansion)} extends the selected leaf node by prompting a Large Language Model with customized evolutionary operators (e.g., mutation or crossover) to formulate a novel heuristic candidate. The generated candidate is subsequently assessed on the target dataset during \textbf{Phase 3 (Evaluation)}, where its empirical performance is normalized into fractional pseudo-counts to facilitate rigorous Bayesian inference. Next, in \textbf{Phase 4 (Bottom-up Belief Update)}, the evaluation outcome is monotonically backpropagated along the ancestral sequence, symmetrically updating the Beta distribution parameters of all encompassing parent nodes to refine the hierarchical belief model continuously. Finally, \textbf{Phase 5 (Dynamic Clade Freezing)} periodically evaluates the statistical viability of extensively visited clades. Subtrees whose aggregated empirical means fall critically below the performance threshold of the incumbent best heuristic are permanently pruned (frozen), thereby progressively constricting the search space and concentrating computational resources exclusively on high-potential heuristic regions. Upon termination, the framework definitively yields the globally optimal heuristic discovered throughout the search trajectory.

\section{Discussion}
\begin{figure*}[t]
    \centering
    \includegraphics[width=0.95\textwidth]{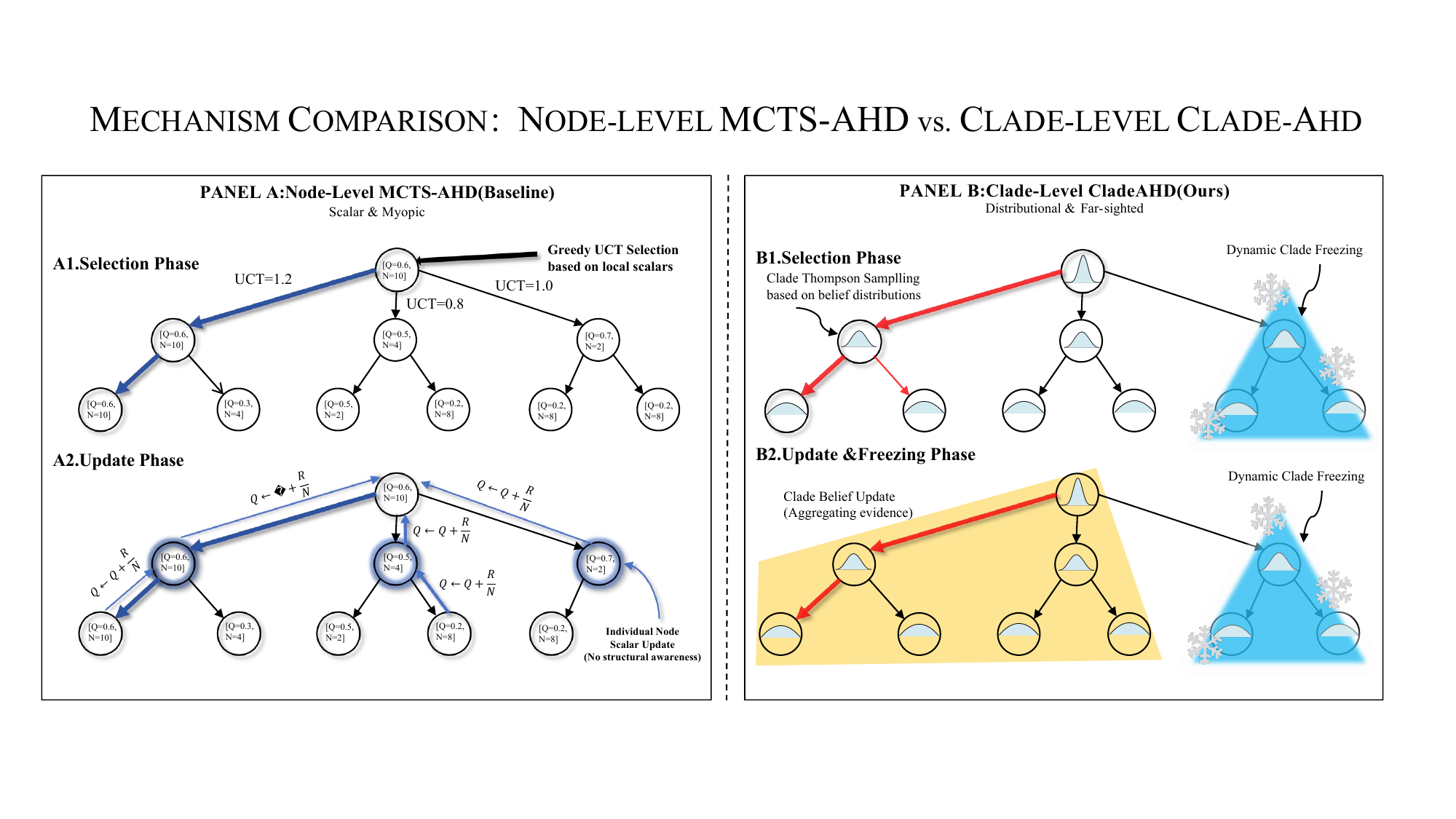}
    
    \caption{\textbf{Mechanism Comparison between Node-Level MCTS (MCTS-AHD) and Clade-Level MCTS (Clade-AHD).} 
    \textbf{(Panel A)} The baseline MCTS-AHD relies on scalar point estimates (e.g., mean value), leading to \textit{Scalar Myopia} where high-potential but high-variance branches are discarded due to initial noise. 
    \textbf{(Panel B)} Our Clade-AHD models potential as belief distributions. By aggregating evidence from the entire clade and applying \textit{Thompson Sampling}, it effectively identifies promising regions and reduces uncertainty (visualized as sharper posterior curves), while the \textit{Dynamic Freezing} mechanism prunes suboptimal branches.}
    \label{fig:mechanism_comparison}
\end{figure*}
In this section, we discuss the similarities and differences between MCTS-AHD~\cite{Zheng2025} and Clade-AHD. All of these algorithms leverage LLMs for AHD but employ different search strategies and efficiency optimization mechanisms.
\subsection{Comparison with MCTS-AHD}
\paragraph{Similarities between MCTS-AHD and Clade-AHD:}
Both MCTS-AHD and Clade-AHD employ MCTS as the core search framework to organize and explore the heuristic space. Similarly, both methods maintain a tree structure that preserves all historical information, enabling comprehensive exploration and avoiding premature convergence to local optima.

\paragraph{Differences between MCTS-AHD and Clade-AHD:}
A primary distinction lies in the selection mechanism. Where MCTS-AHD relies on the traditional UCT algorithm to evaluate individual nodes, Clade-AHD adopts a Clade-level Thompson sampling approach. This shift in perspective allows for a more strategic evaluation based on the collective evidence from all descendants in a subtree.
The two frameworks also diverge in their approach to credit assignment. Clade-AHD introduces a depth-attenuated backpropagation mechanism to address the temporal credit assignment problem, ensuring a node's influence diminishes with ancestral distance. This contrasts with the uniform credit distribution used in MCTS-AHD.

\subsection{Regret Analysis}
We provide theoretical analysis for two structural mechanisms underpinning Clade-AHD.

\begin{proposition}[Effective Influence under Depth-Attenuated Aggregation]\label{prop:influence_bound}
Let the search tree have expected branching factor $b$ and let the depth-attenuation factor be $\lambda \in (0,1)$. For any ancestor node $v$, the total expected influence of all descendants on its clade belief, measured in effective sample-size units, satisfies:
\begin{equation}
n_{\textup{eff}}(v) = \sum_{d=0}^{\infty} \mathbb{E}[N_d]\,\lambda^d \le \sum_{d=0}^{\infty} b^d \lambda^d = \frac{1}{1 - b\lambda},
\end{equation}
provided $b\lambda < 1$, i.e., $\lambda < 1/b$.
\end{proposition}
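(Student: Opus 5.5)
The plan is to write the clade belief update in terms of the number of descendants at each depth, and then bound those counts with a branching-process estimate. First, I fix notation. For a node $v$, let $N_d$ be the number of nodes $u \in \mathcal{T}(v)$ with $\mathrm{dist}(v,u)=d$. Then $N_0 = 1$ (the node $v$ itself). Each evaluated node adds fractional pseudo-counts $(\alpha_u - 1) + (\beta_u - 1) \le 1$ to the aggregation. This follows because a normalized score $s \in [0,1]$ contributes $s$ to $\alpha_u - 1$ and $1-s$ to $\beta_u - 1$. I will treat this bound as one effective sample per node; if a node is visited several times, the bound scales by a constant visit cap, which I absorb into the unit.

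Regrouping the sums in the update rules for $\alpha_{\mathcal{T}}(v)$ and $\beta_{\mathcal{T}}(v)$ by depth gives
\begin{equation*}
(\alpha_{\mathcal{T}}(v)-1)+(\beta_{\mathcal{T}}(v)-1) \le \sum_{d\ge 0} N_d\,\lambda^d .
\end{equation*}
Taking expectations, and using linearity, which is valid because all terms are nonnegative (monotone convergence), yields the definition $n_{\mathrm{eff}}(v)=\sum_d \mathbb{E}[N_d]\lambda^d$.

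The key step is the bound $\mathbb{E}[N_d] \le b^d$. I will model tree growth below $v$ as a Galton--Watson-type process in which each node's offspring count has conditional mean at most $b$, given the history. Conditioning on generation $d-1$ gives $\mathbb{E}[N_d \mid \mathcal{F}_{d-1}] \le b\,N_{d-1}$. Taking expectations and inducting from $N_0=1$ then gives $\mathbb{E}[N_d]\le b^d$.

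It remains to sum a geometric series with ratio $b\lambda<1$:
\begin{equation*}
\sum_{d\ge 0} b^d\lambda^d=\frac{1}{1-b\lambda}.
\end{equation*}

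I expect the main obstacle to be making precise the phrase ``expected branching factor $b$''. The tree is grown adaptively by Thompson Sampling, and under Dynamic Clade Freezing, so offspring counts are neither independent nor identically distributed. The inductive step needs only the conditional-mean bound, which is a supermartingale-style inequality. I would therefore state it as the interpretation of $b$, namely an upper bound on the conditional mean number of children of any node. With that interpretation, independence is not needed. Freezing only removes nodes, so it can only decrease $N_d$ and preserves the bound. Finally, I would note that the finite budget truncates the sum at the maximum depth, which only strengthens the inequality.
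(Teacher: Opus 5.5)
Your proof is correct and follows the same route as the paper's: you bound $\mathbb{E}[N_d]\le b^d$ and sum the geometric series $\sum_{d\ge 0}(b\lambda)^d = 1/(1-b\lambda)$, and where the paper simply asserts the depth-$d$ bound, you derive it from an explicit conditional-mean reading of ``expected branching factor'' ($\mathbb{E}[N_d\mid\mathcal{F}_{d-1}]\le b\,N_{d-1}$ plus induction), which is the assumption actually needed when the tree is grown adaptively. One harmless aside: your motivating claim $(\alpha_u-1)+(\beta_u-1)\le 1$ presumes node-local counts, whereas Phase~4 of Algorithm~\ref{alg:clade_ahd} backpropagates every outcome to all ancestors, so that $\alpha_u,\beta_u$ already absorb the whole subtree $\mathcal{T}(u)$; since $n_{\mathrm{eff}}(v)$ is defined directly by the displayed sum, the inequality you prove is unaffected.
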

\begin{proof}
At depth offset $d$ from $v$, the expected number of descendants is at most $b^d$, each contributing evidence weighted by $\lambda^d$. The expected total weighted influence is bounded by the geometric series $\sum_{d=0}^{\infty} (b\lambda)^d = 1/(1-b\lambda)$, which converges iff $b\lambda < 1$.
\end{proof}

Proposition~\ref{prop:influence_bound} establishes a geometric convergence property: under a properly calibrated $\lambda$, the clade belief at any node is dominated by structurally proximate descendants. Distant generations contribute vanishing weight exponentially, preventing deep semantic drift from unboundedly distorting ancestral value estimates. This bound is derived under the worst-case assumption that all possible descendant positions are populated; MCTS selection bias concentrates expansion in higher-reward regions, further limiting the practical influence of deep branches beyond this conservative bound.

\begin{proposition}[False-Freezing Probability under Depth-Weighted Evidence]\label{prop:freezing_regret}
Let $v^*$ be the optimal clade with expected potential $\mu^*$. Its depth-attenuated clade mean is $\hat{\mu}_w = \sum_{i=1}^{N} \tilde{w}_i s_i$, where $\tilde{w}_i = \lambda^{d_i} / \sum_{j=1}^{N} \lambda^{d_j}$ are normalized depth weights, $s_i \in [0,1]$ are normalized scores, and $N \ge V_{\min}$. Under vanishing weight exponentially independent of the clade, the probability that the weighted mean falls below the freezing threshold satisfies:
\begin{equation}
\begin{aligned}
\mathbb{P}\bigl(\hat{\mu}_w < \gamma \mu^*\bigr) &\le \exp\!\Bigl(-2 N\,\eta\,(1 - \gamma)^2 (\mu^*)^2\Bigr) \\
&\le \exp\!\Bigl(-2 V_{\min}\,\eta\,(1 - \gamma)^2 (\mu^*)^2\Bigr),
\end{aligned}
\end{equation}
where $\eta = 1 / (N \sum_{i=1}^{N} \tilde{w}_i^2) \in (0, 1]$ is the weight-efficiency factor.
\end{proposition}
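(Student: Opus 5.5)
The plan is to apply Hoeffding's inequality for weighted sums of independent bounded variables. First I would state the modelling assumptions the proposition tacitly needs. Conditional on the depths $d_1,\dots,d_N$ (so the weights $\tilde{w}_i$ are fixed), the scores $s_i\in[0,1]$ of the nodes in the optimal clade $\mathcal{T}(v^*)$ are taken to be independent. Their depth-weighted expectation is taken to be at least $\mu^*$, so that $\mathbb{E}[\hat{\mu}_w]=\sum_i \tilde{w}_i\,\mathbb{E}[s_i]\ge \mu^*$. I read the phrase ``vanishing weight exponentially independent of the clade'' as exactly this: the weights are determined by tree depth, not by the observed scores, so they may be treated as deterministic once we condition. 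Because the final bound does not depend on the depths, we can then remove the conditioning by taking expectations.

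The core step is Hoeffding's inequality for $\hat{\mu}_w=\sum_i X_i$ with $X_i=\tilde{w}_i s_i$. Each $X_i$ lies in an interval of length $\tilde{w}_i$, so for $t>0$,
\[
\mathbb{P}\bigl(\hat{\mu}_w-\mathbb{E}[\hat{\mu}_w]\le -t\bigr)\le \exp\!\Bigl(-\tfrac{2t^2}{\sum_i \tilde{w}_i^2}\Bigr).
\]
Since $\mathbb{E}[\hat{\mu}_w]\ge\mu^*$, the event $\{\hat{\mu}_w<\gamma\mu^*\}$ is contained in the event $\{\hat{\mu}_w-\mathbb{E}[\hat{\mu}_w]< -(1-\gamma)\mu^*\}$. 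So I set $t=(1-\gamma)\mu^*>0$, which is positive because $\gamma\in(0,1)$ and $\mu^*>0$; the case $\mu^*=0$ is trivial because the event is then empty. Substituting $\sum_i\tilde{w}_i^2 = 1/(N\eta)$, which is just the definition of $\eta$, gives exactly $\exp(-2N\eta(1-\gamma)^2(\mu^*)^2)$.

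Two side facts remain. First, $\eta\in(0,1]$: Cauchy--Schwarz gives $1=(\sum_i\tilde{w}_i)^2\le N\sum_i\tilde{w}_i^2$, so $\eta\le1$, and $\eta>0$ because the weights are finite and positive. Second, the bound is decreasing in $N$, so $N\ge V_{\min}$ yields the second inequality. There is a subtlety here: $\eta$ itself depends on $N$ and on the depth profile. What is actually monotone is the effective sample size $N\eta=1/\sum_i\tilde{w}_i^2$. I would therefore state the second inequality either under the stated convention that $\eta$ is held fixed, or under the explicit assumption $N\eta\ge V_{\min}\eta$. The latter holds trivially when $\eta$ is interpreted as the efficiency of the current weight profile.

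I expect the main obstacle to be justifying independence, not the concentration calculation itself. Descendants are generated adaptively by LLM mutations of their ancestors, so their scores are correlated, and the depths and node count $N$ are random stopping-time quantities. If I need something more robust, my first attempt would be to build a martingale difference sequence along the order in which nodes are evaluated and apply Azuma--Hoeffding. This requires the conditional-mean assumption $\mathbb{E}[s_i\mid\mathcal{F}_{i-1}]\ge\mu^*$ and predictable weights. It gives the same exponent, since the increments are bounded by $\tilde{w}_i$. The snag is that normalized weights depend on the final $N$, which is not predictable. I would sidestep this by proving the bound for each fixed $N$ and fixed depth sequence and then noting that the bound is uniform across them.
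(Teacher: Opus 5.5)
Your proposal is correct and follows the paper's proof essentially step for step: weighted Hoeffding with ranges $\tilde{w}_i$, then $t=(1-\gamma)\mu^*$, then the substitution $\sum_i \tilde{w}_i^2 = 1/(N\eta)$, and finally $N\ge V_{\min}$ with the same $\eta$ on both sides for the second line; your use of $\mathbb{E}[\hat{\mu}_w]\ge\mu^*$ via an event inclusion mildly generalizes the paper's equality assumption, and your Cauchy--Schwarz argument proves the bound $\eta\le 1$ that the paper only asserts. Two of your asides are inaccurate but not needed: the bound does depend on the depth profile through $\eta$, so it cannot be de-conditioned or called uniform over depth sequences as you claim, and $N\eta = 1/\sum_i \tilde{w}_i^2$ is not monotone in $N$ (adding a shallow node to a clade dominated by many deep nodes can decrease it)---since the proposition takes the weights $\tilde{w}_i$ as given, your main argument is unaffected.
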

\begin{proof}
For independent $s_i \in [0,1]$ with normalized weights $\tilde{w}_i$, the weighted Hoeffding inequality~\cite{hoeffding1963probability} gives $\mathbb{P}(\hat{\mu}_w - \mathbb{E}[\hat{\mu}_w] \le -t) \le \exp(-2t^2 / \sum_i \tilde{w}_i^2)$. Setting $t = \mathbb{E}[\hat{\mu}_w] - \gamma\mu^*$ and assuming $\mathbb{E}[\hat{\mu}_w] = \mu^*$, i.e., that the depth-weighting is unbiased for the clade's expected potential, yields $t = (1-\gamma)\mu^*$. Substituting $\sum_i \tilde{w}_i^2 = 1/(N\eta)$ gives the exponent $-2N\eta t^2$. Since the freezing rule only triggers when $N \ge V_{\min}$ and the right-hand side is decreasing in $N$, the probability is upper-bounded by evaluating at the minimum required sample size $N = V_{\min}$. When all evaluations reside at equal depth, $\tilde{w}_i = 1/N$, $\sum_i \tilde{w}_i^2 = 1/N$, and $\eta = 1$, recovering the standard unweighted Hoeffding bound. For non-uniform depths, $\tilde{w}_i$ concentrates mass on shallower nodes, yielding $\sum_i \tilde{w}_i^2 > 1/N$ and thus $\eta < 1$, which directly quantifies the effective sample-size reduction due to depth-weighting.
\end{proof}

Proposition~\ref{prop:freezing_regret} establishes that the false-freezing probability retains exponential decay in $V_{\min}$, with $\eta$ capturing the cost of non-uniform depth weighting. Several idealizations qualify this result. The global-best reference $\mu_{\textup{global}}^*$ appearing in the actual freezing rule is itself a random variable subject to upward noise; replacing it with the constant $\mu^*$ omits a source of false-positive variance, and the bound should therefore be interpreted as an idealized baseline rather than a tight guarantee. The assumption of conditional independence fails to account for the inherent structural correlations among LLM-generated heuristics derived from a shared ancestor, thereby degrading the effective sample size more severely than the $\eta$ adjustment implies. Moreover, $\mathbb{E}[\hat{\mu}_w] = \mu^*$ holds only when depth is uncorrelated with heuristic quality; systematic score drift across generations would introduce an additional bias term not captured here. Despite these idealizations, the bound conveys a robust qualitative message: increasing $V_{\min}$ and maintaining balanced tree depth jointly suppress erroneous pruning. Practical calibration remains informed by the empirical sensitivity analyses in Table~\ref{tab:sensitivity_gamma} and Table~\ref{tab:sensitivity_vmin}.

\begin{algorithm}[h!]
\caption{Clade-AHD: Clade-level Bayesian MCTS for Heuristic Design}
\label{alg:clade_ahd}
\small 
\begin{algorithmic}[1]
\REQUIRE Evaluation Budget $T_{max}$, Initial Population $N_{init}$, Dataset $D$
\REQUIRE Bayesian Priors $\alpha_0, \beta_0$, Decay Factor $\lambda$, Temperature Param $\omega_{cool}$
\REQUIRE Pruning Threshold $\gamma$, Min Visits $V_{min}$
\ENSURE Best Heuristic $h^*$

\STATE \textbf{Initialization:}
\STATE $v_{root} \gets \textsc{CreateNode}(h_{seed}, \alpha_0, \beta_0)$
\STATE $\mathcal{T} \gets \{v_{root}\}$, $evals \gets 0$, $h^* \gets h_{seed}$

\WHILE{$evals < T_{max}$}
    \STATE \color{blue}// Phase 1: Selection via Clade-Level Thompson Sampling \color{black}
    \STATE $v_{next} \gets v_{root}$
    \WHILE{$v_{next}$ is not Leaf}
        \STATE Candidates $\mathcal{C} \gets \{u \in v_{next}.children \mid u.frozen = \text{False}\}$
        \IF{$\mathcal{C}$ is empty} 
            \STATE \textbf{break} 
        \ENDIF        
        \FOR{each child $u \in \mathcal{C}$}
            \STATE \textbf{// 1. Clade Belief Aggregation}
            \STATE $\alpha_{\mathcal{T}}(u) \gets 1 + \sum_{d \in \text{desc}(u)} \lambda^{\text{dist}(u, d)} (\alpha_d - 1)$
            \STATE $\beta_{\mathcal{T}}(u) \gets 1 + \sum_{d \in \text{desc}(u)} \lambda^{\text{dist}(u, d)} (\beta_d - 1)$
            
            \STATE \textbf{// 2. Budget-Aware Annealing }
            \STATE $p \gets evals / T_{max}$
            \STATE $\tau \gets (1/(1-p))^{\omega_{cool}}$
            \STATE $\tilde{\alpha} \gets \alpha_{\mathcal{T}}(u) \cdot \tau, \quad \tilde{\beta} \gets \beta_{\mathcal{T}}(u) \cdot \tau$
            
            \STATE \textbf{// 3. Thompson Sampling}
            \STATE $\theta_u \sim \text{Beta}(\tilde{\alpha}, \tilde{\beta})$
        \ENDFOR
        \STATE $v_{next} \gets \text{argmax}_{u \in \mathcal{C}} (\theta_u)$
    \ENDWHILE

    \STATE \color{blue}// Phase 2: Expansion (LLM Generation) \color{black}
    \STATE Select operator $op \in \{\text{Mutation}, \text{Crossover}\}$
    \STATE $h_{new} \gets \text{LLM}(v_{next}.context, op)$
    \STATE $v_{new} \gets \textsc{CreateNode}(h_{new}, \alpha_0, \beta_0)$
    \STATE $v_{next}.children.\text{add}(v_{new})$, $\mathcal{T}.\text{add}(v_{new})$

    \STATE \color{blue}// Phase 3: Evaluation \color{black}
    \STATE $score \gets \text{Evaluate}(h_{new}, D)$
    \STATE Update global best $h^*$ if $score > \text{score}(h^*)$
    \STATE $outcome \gets \textsc{Normalize}(score)$ \COMMENT{Convert to fractional pseudo-counts}
    \STATE $evals \gets evals + 1$

    \STATE \color{blue}// Phase 4: Bottom-Up Belief Update \color{black}
    \STATE $v \gets v_{new}$
    \WHILE{$v \neq \text{NULL}$}
        \STATE $\alpha_v \gets \alpha_v + outcome$
        \STATE $\beta_v \gets \beta_v + (1 - outcome)$
        \STATE $v \gets v.parent$
    \ENDWHILE

    \STATE \color{blue}// Phase 5: Dynamic Clade Freezing \color{black}
    \IF{$evals \pmod{N_{check}} == 0$}
        \FOR{$v \in \mathcal{T}$ where $v.visits > V_{min}$}
             \STATE $\mu_{\mathcal{T}} \gets \alpha_{\mathcal{T}}(v) / (\alpha_{\mathcal{T}}(v) + \beta_{\mathcal{T}}(v))$
             \IF{$\mu_{\mathcal{T}} < \gamma \cdot \text{score}(h^*)$}
                 \STATE $v.frozen \gets \text{True}$ \COMMENT{Prune entire clade}
             \ENDIF
        \ENDFOR
    \ENDIF
\ENDWHILE
\STATE \textbf{return} $h^*$
\end{algorithmic}
\end{algorithm}

\section{Experiments}
In this section, we comprehensively evaluate Clade-AHD. We first detail the experimental settings and baselines. Then, we assess its performance on standard benchmarks (Sec. VII-B) and its zero-shot generalization to large-scale instances (Sec. VII-C). Finally, we conduct an in-depth analysis of its search dynamics (Sec. VII-D) and perform extensive ablation studies to validate its core components (Sec. VII-E).

\subsection{Experimental Setup}
In this section, we evaluate the proposed \textbf{Clade-AHD} on a diverse set of CO problems. To demonstrate versatility, we apply \textbf{Clade-AHD} to design core heuristic functions for two distinct algorithmic paradigms: \textbf{constructive heuristics} and \textbf{Ant Colony Optimization (ACO)}.

\textbf{Settings.}
We initialize the search with $N_I=4$ nodes and maintain a population size of 10. For Bayesian inference, we use priors $\alpha=\beta=1.0$ with 10 pseudo-evaluations for stability. The annealing parameter is set to $\omega_{cool}=1.0$, and the backpropagation decay factor is $\lambda=0.8$ (determined to be optimal through a sensitivity analysis comparing $\lambda \in \{0.0, 0.5, 0.8\}$). We apply dynamic pruning with a threshold of 0.1 and a minimum visit count of 10. To handle score variance, we use adaptive normalization at the 90th percentile and enable parallel operators for efficiency. During evaluation, each heuristic is restricted to a 180 seconds timeout on the dataset $\mathcal{D}$. Detailed configurations of the dataset and framework are provided below. To verify the generalization of our framework, we evaluate \textbf{Clade-AHD} using \texttt{GPT-4o-mini} and \texttt{GLM-4-flash}.

\textbf{Baselines.}
To validate the effectiveness of heuristics generated by \textbf{Clade-AHD}, we compare our approach against four categories of benchmarks: 
\textbf{Manual Heuristics:} Classic baselines including Nearest-greedy~\cite{romera2024mathematical}, ACO~\cite{dorigo2007ant}, and Expected Improvement (EI)~\cite{movckus1974bayesian}.
\textbf{Traditional AHD:} The Genetic Programming-based method GHPP~\cite{duflo2019gp}.
\textbf{Neural Combinatorial Optimization (NCO):} Representative learning-based solvers under similar frameworks, specifically POMO~\cite{kwon2020pomo} and DeepACO~\cite{ye2023deepaco}, with recent advances extending to colored TSP variants via attention-based RL~\cite{zhu2024reinforcement} and multi-objective TSP via meta-DRL~\cite{fu2025towards}.
\textbf{LLM-based AHD:} State-of-the-art frameworks including FunSearch~\cite{romera2024mathematical}, EoH~\cite{liu2024evolution}, ReEvo~\cite{ye2024reevo}, and the recent HSEvo~\cite{dat2025hsevo}.

Regarding initialization, FunSearch, ReEvo, and HSEvo require a handcrafted seed function; to ensure fairness, we provide the same rudimentary seed for these methods without additional external knowledge. In contrast, EoH and our \textbf{Clade-AHD} initiate evolution without manual seeds, demonstrating superior autonomy and applicability.

All experiments are conducted on a single AMD-8845HS CPU. Following the protocol in \cite{liu2024evolution}, we set the evaluation budget for all LLM-based AHD methods to $T=1,000$ queries on the evaluation dataset $\mathcal{D}$.

\subsection{Performance on Standard Benchmarks}
To demonstrate the versatility of Clade-AHD, we first evaluate its performance across six standard NP-hard CO benchmarks using two distinct algorithmic paradigms: step-by-step construction and Ant Colony Optimization (ACO). By testing on both paradigms, we verify that the generated heuristics are framework-agnostic and robust to different search methodologies.

\subsubsection{Results under Constructive Framework}
The constructive framework (or step-by-step construction) builds a feasible solution node-by-node~\cite{asani2023computation}. Beyond its role in LLM-based AHD, it serves as a foundational paradigm for Neural Combinatorial Optimization (NCO) methods~\cite{vinyals2015pointer,bello2016neural}. We utilize this framework to design heuristics for TSP, KP, and Online BPP. For all LLM-based AHD methods, the evaluation dataset $\mathcal{D}$ consists of 64 TSP instances ($N=50$) and 64 KP instances ($N=100$, Capacity $W=25$).

As shown in Table~\ref{tab:step-by-step} and Table~\ref{tab:step-bpp-online}, Clade-AHD consistently outperforms existing LLM-based AHD methods and manual heuristics on TSP, KP, and Online BPP. Notably, when tested on larger instances ($N=200$), \textbf{Clade-AHD} surpasses the advanced NCO method POMO. Crucially, unlike POMO, which mandates task-specific training, Clade-AHD operates without pre-training, demonstrating superior zero-shot generalization on NP-hard problems.

\begin{table*}[t]
\centering
\caption{Designing heuristics with the step-by-step construction framework for TSP and KP. We evaluate methods on 6 test sets with 1,000 instances each. Test sets with in-domain scales (i.i.d. to the evaluation dataset D) are underlined. Since AHD methods have no guarantees for generalization ability, the effect on in-domain datasets is more important. Optimal for TSP is obtained by LKH (Lin \& Kernighan, 1973), and Optimal for KP is the result of OR-Tools. Each LLM-based AHD method is run three times, and we report the average performance. The best-performing method with each LLM is shaded, and each test set's overall best result is in bold.}
\label{tab:step-by-step}
\renewcommand{\arraystretch}{1.1}
\setlength{\tabcolsep}{3pt}
\footnotesize
\begin{tabular}{@{}lcccccccccccc@{}}
\toprule
\multicolumn{1}{c}{Task} & \multicolumn{6}{c}{\textbf{TSP}} & \multicolumn{6}{c}{\textbf{KP}} \\
\cmidrule(lr){2-7} \cmidrule(l){8-13}
\multicolumn{1}{c}{N=} & \multicolumn{2}{c}{\underline{$N$=50}} & \multicolumn{2}{c}{\underline{$N$=100}} & \multicolumn{2}{c}{$N$=200} & \multicolumn{2}{c}{\underline{$N$=50, $W$=12.5}} & \multicolumn{2}{c}{\underline{$N$=100, $W$=25}} & \multicolumn{2}{c}{$N$=200, $W$=25} \\
\cmidrule(lr){2-3} \cmidrule(lr){4-5} \cmidrule(lr){6-7} \cmidrule(lr){8-9} \cmidrule(lr){10-11} \cmidrule(l){12-13}
\multicolumn{1}{c}{Methods} & Obj.$\downarrow$ & Gap & Obj.$\downarrow$ & Gap & Obj.$\downarrow$ & Gap & Obj.$\uparrow$ & Gap & Obj.$\uparrow$ & Gap & Obj.$\uparrow$ & Gap \\ \midrule
Optimal & 5.675 & - & 7.768 & - & 10.659 & - & 20.037 & - & 40.271 & 0.75\% & 57.448 & 0.68\% \\
Greedy Construct & 6.959 & 22.62\% & 9.706 & 24.94\% & 13.461 & 26.29\% & 19.985 & 0.26\% & 40.225 & 0.86\% & 57.395 & 0.77\% \\
POMO & \textbf{5.697} & \textbf{0.39\%} & \textbf{8.001} & \textbf{3.01\%} & 12.897 & 20.45\% & 19.612 & 2.12\% & 39.676 & 2.22\% & 57.271 & 0.98\% \\ \midrule
\multicolumn{13}{c}{\textit{LLM-based AHD: GPT-4o-mini}} \\ \midrule
Funsearch & 6.357 & 12.00\% & 8.850 & 13.93\% & 12.372 & 15.54\% & 19.988 & 0.24\% & 40.227 & 0.86\% & 57.398 & 0.76\% \\
EoH & 6.394 & 12.67\% & 8.894 & 14.49\% & 12.437 & 16.68\% & 19.993 & 0.22\% & 40.231 & 0.85\% & 57.399 & 0.76\% \\
MCTS-AHD & \textbf{6.225} & \textbf{9.69\%} & 8.684 & 11.79\% & 12.120 & 13.71\% & 20.015 & 0.11\% & 40.252 & 0.80\% & 57.423 & 0.72\% \\
\rowcolor{gray!20} Clade-AHD & 6.265 & 10.39\% & \textbf{8.484} & \textbf{9.22\%} & \textbf{11.864} & \textbf{11.30\%} & 20.013 & 0.12\% & \textbf{40.576} & \textbf{0.00\%} & \textbf{57.840} & \textbf{0.00\%} \\ \bottomrule
\end{tabular}
\end{table*}

\begin{table}[h!]
\caption{Design step-by-step construction heuristics for online BPP. The table exhibits the performance gaps of heuristics to the lower bound. Each LLM-based AHD method is run three times for the average gaps. In-domain scales are underlined.}
\label{tab:step-bpp-online}
\centering
\renewcommand{\arraystretch}{1.25}
\setlength{\tabcolsep}{3pt}
\resizebox{\columnwidth}{!}{
\begin{tabular}{@{}lccccccc@{}}
\toprule
\multicolumn{1}{c}{Test sets} & 1k\_100 & 1k\_500 & 5k\_100 & 5k\_500 & 10k\_100 & 10k\_500 & Avg. \\ 
\midrule
Best Fit & 4.77\% & 0.25\% & 4.31\% & 0.55\% & 4.05\% & 0.47\% & 2.40\% \\
First Fit & 5.02\% & 0.25\% & 4.65\% & 0.55\% & 4.36\% & 0.50\% & 2.56\% \\ \midrule
\multicolumn{8}{c}{LLM-based AHD: GPT-4o-mini} \\ \midrule
Funsearch & 2.45\% & 0.66\% & 1.30\% & \textbf{0.25\%} & 1.05\% & \textbf{0.21\%} & 0.99\% \\
EoH & 2.69\% & 0.25\% & 1.63\% & 0.53\% & 1.47\% & 0.45\% & 1.17\% \\
ReEvo & 3.94\% & 0.50\% & 2.72\% & 0.40\% & 2.39\% & 0.31\% & 1.71\% \\
HSEvo & 2.64\% & 1.07\% & 1.43\% & 0.32\% & 1.13\% & \textbf{0.21\%} & 1.13\% \\
MCTS-AHD & 2.45\% & 0.50\% & \textbf{1.06\%} & 0.32\% & \textbf{0.74\%} & 0.26\% & 0.89\% \\ 
\rowcolor{gray!20}
Clade-AHD & \textbf{2.32\%} & \textbf{0.00\%} & 1.22\% & 0.30\% & 1.07\% & 0.27\% & \textbf{0.86\%} \\ \bottomrule
\end{tabular}%
}
\end{table}

\subsubsection{Results under ACO Framework}
ACO is a meta-heuristic inspired by the foraging behavior of ants. It utilizes a heuristic matrix and simulates pheromone-based communication to guide probabilistic path selection for solving CO problems~\cite{dorigo2007ant,kim2024ant}. In the context of AHD, LLMs can be leveraged to automatically design the \textit{generation function} for the heuristic matrix, thereby generalizing the ACO framework to diverse tasks. 

Furthermore, when integrated into the ACO framework (Table~\ref{tab:aco}), the heuristics evolved by Clade-AHD achieve state-of-the-art results, surpassing both traditional ACO baselines and advanced NCO solvers like DeepACO on in-domain scales. \textbf{Clade-AHD} demonstrates significant improvements over existing LLM-based methods (EoH and ReEvo) across all in-domain test sets and three out-of-domain scenarios.

\begin{table*}[t]
\centering
\caption{Designing heuristics with the ACO general framework for solving TSP, CVRP, MKP, and offline BPP. Each test set contains 64 instances, and the LLM-based AHD methods' performances are averaged over three runs.}
\label{tab:aco}
\renewcommand{\arraystretch}{1.15}
\setlength{\tabcolsep}{1.5pt}
\scriptsize
\resizebox{\textwidth}{!}{%
    \begin{tabular}{@{}l cccccccccccccccc@{}}
    \toprule
    \multirow{3}{*}{\textbf{Method}} & \multicolumn{4}{c}{\textbf{TSP}} & \multicolumn{4}{c}{\textbf{CVRP}} & \multicolumn{4}{c}{\textbf{MKP}} & \multicolumn{4}{c}{\textbf{Offline BPP}} \\
    \cmidrule(lr){2-5} \cmidrule(lr){6-9} \cmidrule(lr){10-13} \cmidrule(l){14-17}
    & \multicolumn{2}{c}{$N$=50} & \multicolumn{2}{c}{$N$=100} & \multicolumn{2}{c}{$N$=50, $C$=50} & \multicolumn{2}{c}{$N$=100, $C$=50} & \multicolumn{2}{c}{$N$=200, $m$=5} & \multicolumn{2}{c}{$N$=200, $m$=5} & \multicolumn{2}{c}{$N$=500} & \multicolumn{2}{c}{$N$=1k} \\
    \cmidrule(lr){2-3} \cmidrule(lr){4-5} \cmidrule(lr){6-7} \cmidrule(lr){8-9} \cmidrule(lr){10-11} \cmidrule(lr){12-13} \cmidrule(lr){14-15} \cmidrule(l){16-17}
    & Obj.$\downarrow$ & Gap & Obj.$\downarrow$ & Gap & Obj.$\downarrow$ & Gap & Obj.$\downarrow$ & Gap & Obj.$\uparrow$ & Gap & Obj.$\uparrow$ & Gap & Obj.$\downarrow$ & Gap & Obj.$\downarrow$ & Gap \\
    \midrule
    ACO & 5.992 & 4.66\% & 8.948 & 9.70\% & 11.355 & 27.77\% & 18.778 & 25.76\% & 22.738 & 2.28\% & 40.672 & 4.30\% & 208.828 & 2.81\% & 417.938 & 3.15\% \\
    DeepACO & 5.842 & 2.04\% & 8.282 & 1.53\% & \textbf{8.888} & \textbf{0.00\%} & \textbf{14.932} & \textbf{0.00\%} & 23.093 & 0.75\% & 41.988 & 1.20\% & \textbf{203.125} & \textbf{0.00\%} & \textbf{405.172} & \textbf{0.00\%} \\
    \midrule
    \multicolumn{17}{c}{\textit{LLM-based AHD: GPT-4o-mini}} \\
    \midrule
    EoH & 5.828 & 1.80\% & 8.263 & 1.30\% & 9.359 & 5.31\% & 15.681 & 5.02\% & 23.139 & 0.56\% & 41.994 & 1.19\% & 204.646 & 0.75\% & 408.599 & 0.85\% \\
    ReEvo & 5.856 & 2.29\% & 8.340 & 2.24\% & 9.327 & 4.94\% & 16.092 & 7.77\% & 23.245 & 0.10\% & 42.416 & 0.19\% & 206.693 & 1.76\% & 413.510 & 2.06\% \\
    MCTS-AHD & 5.801 & 1.33\% & 8.179 & 0.27\% & 9.286 & 4.48\% & 15.782 & 5.70\% & \textbf{23.269} & \textbf{0.00\%} & \textbf{42.498} & \textbf{0.00\%} & 204.094 & 0.48\% & 407.323 & 0.53\% \\
    \rowcolor{gray!20} Clade-AHD & \textbf{5.725} & \textbf{0.00\%} & \textbf{8.157} & \textbf{0.00\%} & \textbf{9.269} & \textbf{4.48\%} & \textbf{15.544} & \textbf{0.77\%} & 23.167 & 0.32\% & 42.152 & 0.39\% & \textbf{203.365} & \textbf{0.00\%} & \textbf{405.245} & \textbf{0.01\%} \\
    \bottomrule
    \end{tabular}%
}
\end{table*}

\subsection{Zero-Shot Generalization on Large-Scale Instances}
A critical limitation of current LLM-based AHD methods is their tendency to overfit small-scale, synthetic distributions (e.g., $N=50$ or $100$). To rigorously assess the zero-shot generalization capability of Clade-AHD, we scale our evaluation to complex, real-world benchmarks, specifically evaluating on TSPLib and CVRPLib datasets without any fine-tuning.

\subsubsection{Generalization on TSPLib}
While evaluating on 64 stochastic instances has become a standard protocol in contemporary AHD literature~\cite{liu2024evolution}, such restricted scales often fail to expose the inherent robustness of optimization algorithms. They provide only a limited glimpse into performance. To bridge this evaluation gap, we rigorously tested Clade-AHD on large-scale TSPLib benchmarks featuring over 1,000 nodes. 

Table~\ref{tab:comparison} details the performance on TSPLib instances featuring over 1,000 nodes. Clade-AHD delivers superior solution quality across all highly-dimensional spatial distributions, confirming that our clade-level abstraction prevents structural brittleness.

\begin{table}[htbp]
\centering
\small
\setlength{\tabcolsep}{4pt}
\caption{Average solution costs on large-scale TSPLib instances (over 1,000 nodes) over three independent runs.}
\label{tab:comparison}
\begin{tabular}{lrrrr}
\toprule
Instance & EoH & FunSearch & ReEvo & Clade-AHD (Ours) \\
\midrule
pr1002   & 328998 & 328998 & 328998 & \textbf{309822} \\
dsj1000  & 24631468 & 24171092 & 24631468 & \textbf{21770798} \\
u1060    & 303035 & 298491 & 303035 & \textbf{268710} \\
vm1084   & 300550 & 300130 & 300550 & \textbf{292902} \\
rl1304   & 320653 & 322147 & 320653 & \textbf{302571} \\
rl1323   & 325747 & 330227 & 325747 & \textbf{322558} \\
nrw1379  & 69498 & 69686 & 69498 & \textbf{63883} \\
u1432    & 183647 & 183647 & 183647 & \textbf{179695} \\
d1655    & 74535 & 74535 & 74535 & \textbf{72035} \\
rl1889   & 400620 & 390396 & 400620 & \textbf{377880} \\
vm1748   & 421325 & 419512 & 421325 & \textbf{400142} \\
pcb1173  & 72092 & 72447 & 72092 & \textbf{68657} \\
\bottomrule
\end{tabular}
\end{table}

\subsubsection{Generalization on CVRPLib}
This robustness extends to routing problems with complex topological constraints. To rigorously evaluate the proposed framework on routing problems exhibiting diverse topological characteristics, we expanded our analysis to the widely recognized CVRPLib benchmark. 

As Table~\ref{tab:group_comparison} demonstrates, Clade-AHD uniformly minimizes average costs across 182 heterogeneous CVRPLib instances (Groups A through X), substantially outperforming EoH, FunSearch, and ReEvo. Notably, whether navigating tightly clustered customer nodes or randomly dispersed points under strict capacity constraints, our method maintains robust superiority. This uniform success across heterogeneous datasets substantiates that the clade-level exploration mechanism effectively mitigates topological brittleness.

\begin{table}[htbp]
\centering
\small
\setlength{\tabcolsep}{2pt}
\caption{Comparison of average solution costs for CVRPLib instance groups. Each entry reports the mean cost over all instances in the group, comparing EoH, FunSearch, ReEvo, and Clade-AHD.}
\label{tab:group_comparison}
\begin{tabular}{lrrrrr}
\toprule
Group & Instances & EoH & FunSearch & ReEvo & Clade-AHD \\
\midrule
A & 27 & 1736.04 & 1741.58 & 1898.38 & \textbf{1393.47} \\
B & 18 & 1635.44 & 1495.43 & 1516.39 & \textbf{1280.62} \\
E & 11 & 1132.23 & 1168.41 & 1375.87 & \textbf{973.19} \\
F & 3  & 1475.72 & 1234.09 & 1388.32 & \textbf{1000.98} \\
M & 4  & 2068.31 & 1916.33 & 2146.78 & \textbf{1698.17} \\
P & 21 & 893.08  & 898.58  & 1024.37 & \textbf{750.46} \\
X & 98 & 92322.31 & 147415.49 & 122397.07 & \textbf{74433.65} \\
TOTAL & 182 & 50372.56 & 80020.59 & 66609.13 & \textbf{40612.27} \\
\bottomrule
\end{tabular}
\end{table}

\subsection{Search Efficiency and Convergence Dynamics}
Beyond final solution quality, we investigate the search dynamics of Clade-AHD to understand its advantage in navigating the heuristic space $\mathcal{H}$.

To validate the effectiveness of our approach, we illustrate the performance evolution curves, using \texttt{GLM-4-flash}, for designing step-by-step construction heuristics for the TSP. Each curve represents the average performance over three independent runs, with shaded regions indicating the standard deviation.

Figure~\ref{fig:convergence_curves} illustrates the performance evolution curves on the TSP. Unlike population-based methods (e.g., EoH) and scalar-based MCTS-AHD, which exhibit steep initial gains but plateau prematurely due to early exploitation, Clade-AHD maintains a sustained optimization trajectory. This empirically validates that our Clade-level Thompson Sampling effectively bypasses suboptimal traps during the early budget phase, unlocking a higher performance ceiling. Distinguishing itself from population-based evolutionary computation (EC) methods, Clade-AHD demonstrates superior sample efficiency and exploration capabilities. This enables a more comprehensive exploration of the heuristic space $\mathcal{H}$, effectively preventing the search from becoming trapped in local optima.

\begin{figure}[t]
    \centering
    \includegraphics[width=\linewidth]{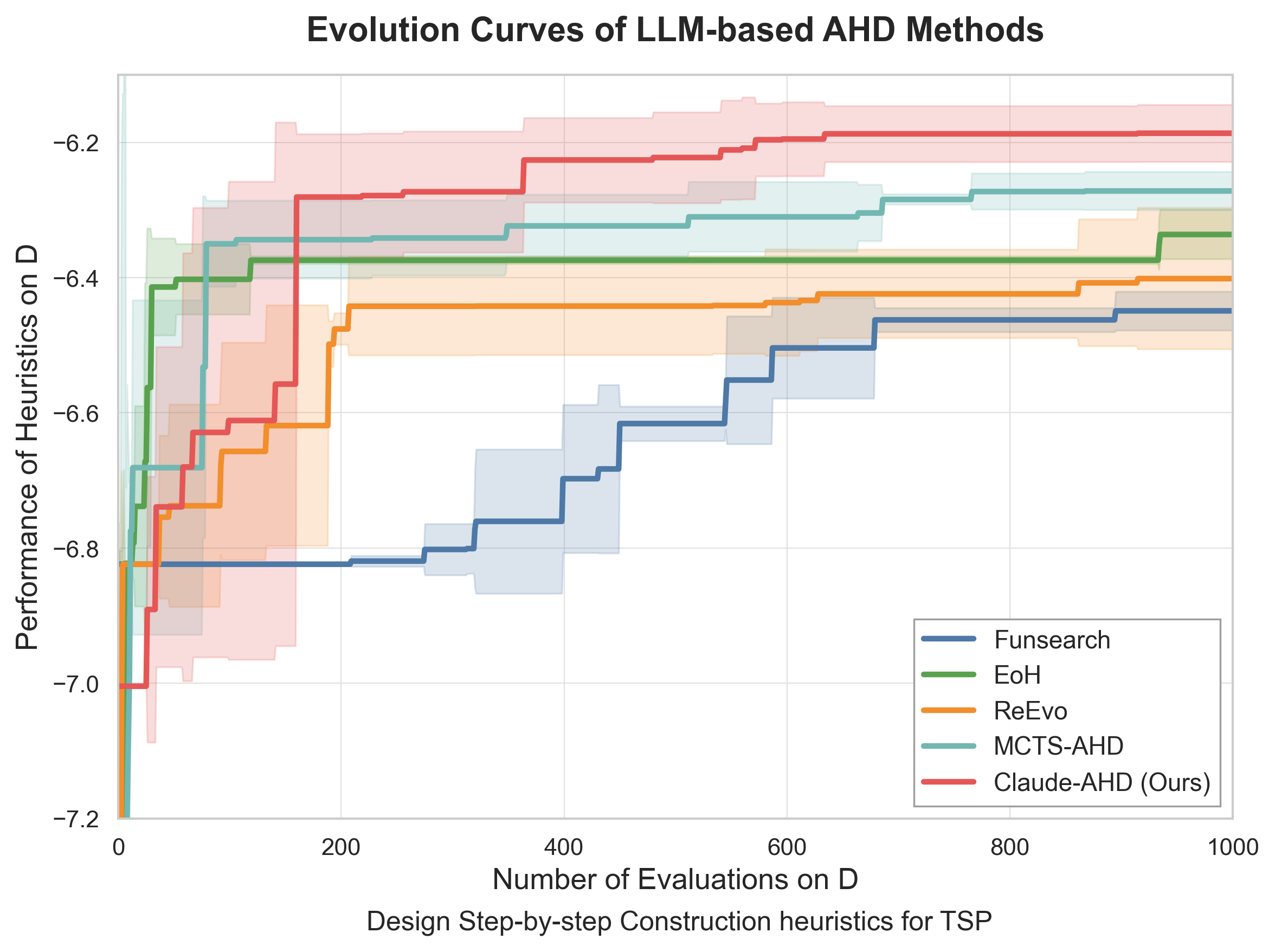}
    \caption{Evolution curves of LLM-based AHD methods on designing Step-by-step Construction heuristics for TSP. The solid lines represent the mean performance over three independent runs, while the shaded regions indicate the standard deviation. Clade-AHD demonstrates superior asymptotic performance and robust search capabilities compared to population-based baselines and MCTS-AHD.}
    \label{fig:convergence_curves}
\end{figure}

\subsection{Ablation and Parameter Sensitivity Analysis}
To isolate the contributions of individual algorithmic designs and assess the framework's robustness, we conduct extensive ablation and sensitivity studies.

\subsubsection{Component Ablations}
To validate the contribution of individual components within \textbf{Clade-AHD}, we conducted ablation studies on the KP and TSP. We constructed five variants, each omitting a specific module, and evaluated them over three independent runs. 

Table~\ref{tab:ablation_components} presents the performance gap upon removing key components. \textit{Dynamic Clade Freezing} and \textit{Temperature Annealing} emerge as the most critical modules, highlighting the necessity of proactively pruning redundant computations and modulating the exploration-exploitation trade-off. Furthermore, \textit{Depth-Attenuated Backpropagation} is essential for preserving long-term dependencies against signal decay in deep trees. Finally, \textit{Pseudo-Evaluation} and \textit{Adaptive Normalization} enforce stability, preventing metric inflation and ensuring consistent estimation despite their subtler magnitude.

\begin{table}[h!]
\centering
\caption{Ablation study on the components of Clade-AHD.We use different variants to design heuristics on TSP50 and KP100 benchmarks with the step-by-step frameworks over three turns.}
\label{tab:ablation_components}
\renewcommand{\arraystretch}{1.25}
\setlength{\tabcolsep}{6pt}
\resizebox{\columnwidth}{!}{
\begin{tabular}{@{}lcc@{}}
\toprule
\textbf{Configuration} & \textbf{TSP50} & \textbf{KP100} \\
\midrule
Clade-AHD (Full Model) & 6.269 (0.000\%) & 40.576 (0.000\%) \\
w/o Pseudo-Evaluations & +1.869\% & +0.862\% \\
w/o Depth-Attenuated Credit & +4.950\% & +0.143\% \\
w/o Dynamic Clade Freezing & +7.751\% & +0.148\% \\
w/o Temperature Annealing & +6.657\% & +0.416\% \\
w/o Adaptive Normalization & +1.870\% & +0.064\% \\
\bottomrule
\end{tabular}%
}
\end{table}

\subsubsection{Deep Dive into Exploration Mechanisms}
We further analyze the structural flaws of frequentist estimation in sparse environments. As shown in Table~\ref{tab:ablation_exploration}, indiscriminately scaling the UCT exploration constant $C$ fails to improve performance. Clade-AHD circumvents this by enforcing explicitly epistemic uncertainty. By modeling clade-level potential via $\mathcal{B}(\alpha_v, \beta_v)$ distributions, the framework quantifies the probability of optimality with much higher precision. Furthermore, the integration of expectation-based pseudo-counts ensures bounded initial variance. This stabilization mechanism prevents noisy, single-evaluation outliers from inducing disproportionate exploration spikes, thereby maintaining a robust and informed convergence trajectory.

\begin{table}[htbp]
\centering
\small
\setlength{\tabcolsep}{2pt} 
\caption{Ablation study on exploration parameters. We compare standard UCT under various exploration constants ($C \in \{0.05, 0.2, 0.5\}$) against MCTS-AHD and Clade-AHD on TSP and KP benchmarks.}
\label{tab:ablation_exploration}
\begin{tabular}{lcccc}
\toprule
Model & Parameter & TSP50 & KP100 & TSP100 \\
\midrule
Standard UCT & $C=0.05$ & \textbf{+11.08\%} & \textbf{+0.056\%} & -- \\
Standard UCT & $C=0.2$  & \textbf{+12.12\%} & \textbf{+0.034\%} & -- \\
Standard UCT & $C=0.5$  & \textbf{+10.11\%} & \textbf{+0.042\%} & -- \\
Baseline MCTS-AHD & $\lambda=0.1$ & Base & Base & 11.79\% \\
Clade-AHD & (Optimized) & -- & \textbf{0.00\%} & \textbf{9.22\%} \\
\bottomrule
\end{tabular}
\end{table}

\begin{table}[htbp]
\centering
\caption{Global depth and layer-wise exploration dynamics in the search tree (mean over 5 independent runs). We trace the maximum tree depth, the proportion of visits to deep nodes ($d \ge 5$), and the corresponding entropy of node visits at the Root versus Depth-4 level across the evaluation budget (10\% to 100\%).}
\label{tab:depth_dynamics}
\renewcommand{\arraystretch}{1.1}
\setlength{\tabcolsep}{3pt}
\resizebox{\columnwidth}{!}{%
\begin{tabular}{@{}lcccc@{}}
\toprule
Budget & Max Depth & Share ($d \ge 5$) & Root Ent. & Depth-4 Ent. \\
\midrule
10\%  & 4.8  & 15.8\% & 0.696 & 0.963 \\
50\%  & 5.8  & 23.8\% & 0.612 & 0.912 \\
90\%  & 6.8  & 29.2\% & 0.585 & 0.880 \\
100\% & 9.8  & 33.0\% & 0.572 & 0.875 \\
\bottomrule
\end{tabular}%
}
\end{table}

Additionally, Tables~\ref{tab:sensitivity_gamma} and~\ref{tab:sensitivity_vmin} validate that our dynamic freezing mechanisms remain robust under optimal thresholding ($\gamma = 0.1$) and sufficient statistical grounding ($V_{\min} = 10$). When the threshold is set too high (e.g., $0.3$), the selection policy becomes excessively restrictive, leading to the premature elimination of high-potential lineages. Conversely, a threshold that is too low (e.g., $0.05$) or entirely absent ($\gamma = 0$) fails to effectively compress the redundant search space.

\begin{table}[htbp]
\centering
\small
\setlength{\tabcolsep}{5pt}
\caption{Sensitivity analysis of the Dynamic Freezing threshold $\gamma$ on TSP-50 and TSP-100.}
\label{tab:sensitivity_gamma}
\begin{tabular}{lccr}
\toprule
Parameter & Task & Tour Length & Change \\
\midrule
$\gamma=0.1$ & TSP-50 / 100 & \textbf{6.562} / \textbf{8.961} & -- (Base) \\
$\gamma=0.3$ & TSP-50 / 100 & 7.348 / 10.276 & \textbf{+11.98\% / +14.67\%} \\
$\gamma=0.05$ & TSP-50 / 100 & 7.355 / 10.014 & \textbf{+12.08\% / +11.75\%} \\
$\gamma=0$ & TSP-50 / 100 & 7.071 / -- & \textbf{+7.75\% / --} \\
\bottomrule
\end{tabular}
\end{table}

\begin{table}[htbp]
\centering
\small
\setlength{\tabcolsep}{5pt}
\caption{Sensitivity analysis of the minimum visits requirement $V_{\min}$ .}
\label{tab:sensitivity_vmin}
\begin{tabular}{lcr}
\toprule
$V_{\min}$ & Task & Performance Degradation \\
\midrule
10 (Base) & TSP-50 / KP-100 & -- \\
0 & TSP-50 / KP-100 & \textbf{+7.75\% / +0.15\%} \\
\bottomrule
\end{tabular}
\end{table}

\subsubsection{Impact of Temporal Credit Assignment}

Finally, Figure~\ref{fig:lambda_sensitivity} visualizes the impact of the depth-attenuated decay factor $\lambda$. The myopic setting ($\lambda = 0$) suffers from severe premature convergence, substantiating our core hypothesis: aggregating hierarchical clade evidence ($\lambda = 0.8$) is indispensable for mitigating evaluation sparsity in AHD.

\begin{figure*}[h]
    \centering
    \includegraphics[width=0.95\textwidth]{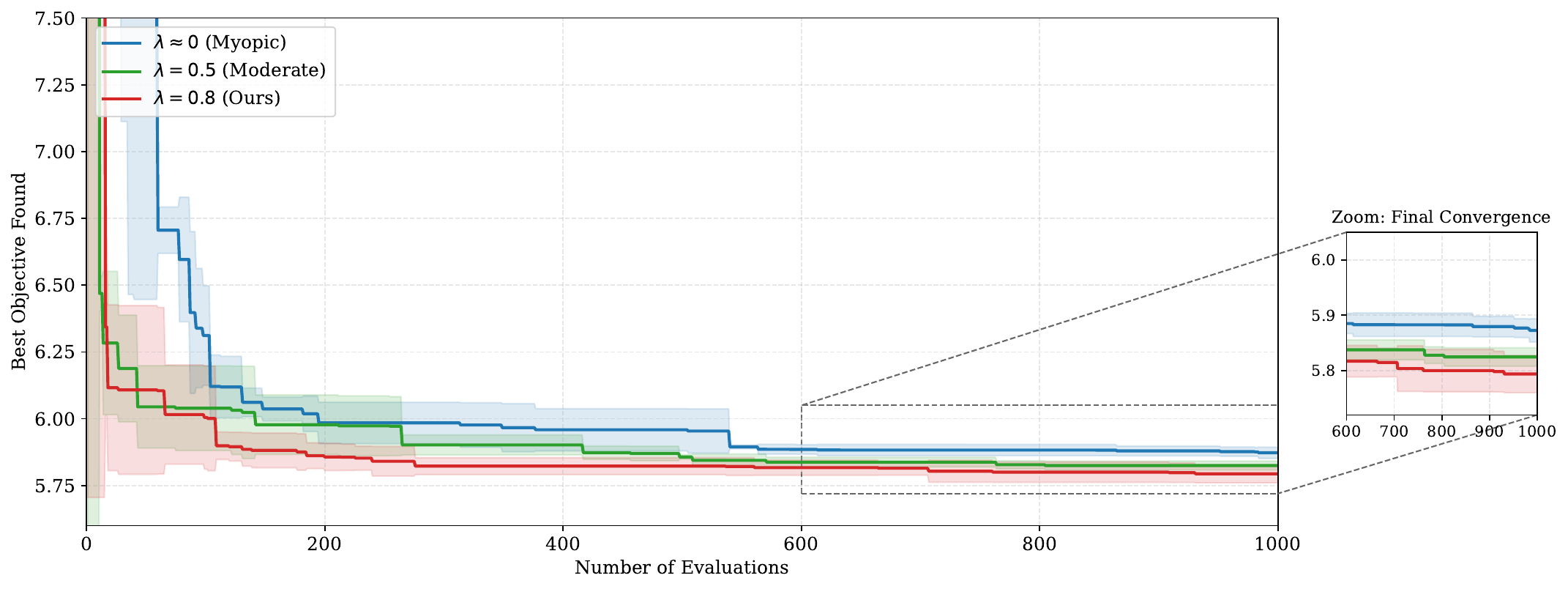}
    \vspace{-0.2cm}
    \caption{\textbf{Sensitivity Analysis of the Decay Factor $\lambda$.} 
    The main plot (left) shows the best objective value found over 1,000 evaluations. The \textbf{blue line} ($\lambda = 0$) represents a myopic, node-centric baseline, which suffers from premature convergence. The \textbf{red line} ($\lambda = 0.8$, Ours) demonstrates the most robust convergence trajectory. The \textbf{zoom panel} (right) highlights the final optimization stage (iterations 600-1000), showing that $\lambda=0.8$ consistently achieves the lowest objective value with narrower variance compared to $\lambda=0.5$ and $\lambda = 0$.}
    \label{fig:lambda_sensitivity}
\end{figure*}

\section{Conclusion}

While Monte Carlo Tree Search (MCTS) shows promise in Automatic Heuristic Design (AHD), its traditional reliance on node-level point estimates severely limits sample efficiency under strictly bounded evaluation budgets. To overcome this, we introduce Clade-AHD, a novel framework that reformulates the search paradigm from isolated nodes to evolutionary clades. We first transition from noisy node-level scalar estimates to clade-level Bayesian beliefs, utilizing Beta distributions to create a distributional abstraction of the search space. We then construct a depth-attenuated belief update mechanism that aggregates descendant evaluations, effectively mitigating semantic drift and evaluation sparsity. To navigate this probabilistic space, we employ a Clade-level Thompson Sampling strategy coupled with a dynamic clade freezing mechanism, which explicitly models epistemic uncertainty to balance exploration and exploitation under strict computational budgets.

Objective evaluation is addressed by comprehensive experiments on standard NP-hard combinatorial optimization problems, including TSP, KP, and CVRP. 
Key findings include: a) Clade-AHD models consistently outperform state-of-the-art LLM-based AHD baselines in both solution quality and sample effectiveness; b) through the clade-level abstraction, Clade-AHD demonstrates robust zero-shot generalization over unseen, large-scale realistic instances (e.g., TSPLib and CVRPLib) without falling into local optima; c) ablation studies underscore that each tailored design in Clade-AHD consistently contributes to the final model's search efficiency; and d) Clade-AHD effectively circumvents the structural flaws of frequentist estimation in sparse environments, identifying high-potential evolutionary clades that standard methods often overlook.

As a pioneering work in Bayesian MCTS for Automatic Heuristic Design, Clade-AHD holds certain limitations. On the one hand, the current performance exhibits sensitivity to prompt design, requiring empirical calibration to align with specific LLM reasoning capabilities. An immediate and promising direction for future work is to integrate automated prompt optimization mechanisms to eliminate this dependency. On the other hand, the current framework primarily focuses on single-objective scenarios. We envision extending Clade-AHD to accommodate more complex, out-of-distribution problem types, such as multi-objective optimization scenarios.

In summary, beyond the immediate technical gains, we hope this research will inspire the community to further explore the potential of hierarchical Bayesian paradigms in automated algorithm design, ultimately advancing practical resource allocation in broader domains such as logistics, manufacturing, and chip design.


\bibliographystyle{IEEEtran}
\bibliography{references}



\end{document}


\title{Supplementary Material to Clade-AHD: Clade-level Selection for Efficient MCTS in Automatic Heuristic Design}
\author{}

\maketitle

\section{Definition of Tasks}
\label{sec:task_definitions}
In this section, we provide formal mathematical formulations for the optimization tasks employed to evaluate Clade-AHD. These tasks encompass six representative NP-hard CO problems\cite{Zheng2025}.

\subsection{NP-hard CO Problems}

Our evaluation benchmark includes the Traveling Salesman Problem (TSP), Capacitated Vehicle Routing Problem (CVRP), 0-1 Knapsack Problem (KP), Multiple Knapsack Problem (MKP), Admissible Set Problem (ASP), and Bin Packing Problem (BPP).

\textbf{Traveling Salesman Problem (TSP).}
The TSP serves as a cornerstone in combinatorial optimization theory \cite{Garey1979}. The objective is to determine the minimal-cost Hamiltonian cycle that visits a set of $N$ cities exactly once and returns to the origin. Given a distance matrix $D = \{d_{ij}\}$ where $d_{ij}$ denotes the Euclidean distance between node $i$ and node $j$, the problem is formulated as minimizing the total tour length:
\begin{equation}
    \min_{\pi} \sum_{i=1}^{N-1} d_{\pi(i), \pi(i+1)} + d_{\pi(N), \pi(1)}
\end{equation}
where $\pi$ represents a permutation of the city indices $\{1, \dots, N\}$. We evaluate our method on standard Euclidean TSP instances.

\textbf{Capacitated Vehicle Routing Problem (CVRP).}
Generalizing the TSP, the CVRP involves optimizing routes for a fleet of vehicles to serve a set of customers subject to capacity constraints \cite{Toth2014}. This paper follows the settings of ReEvo\cite{ye2024reevo} when generating CVRP data sets, fixing the depot coordinates to (0.5, 0.5).  Each instance consists of a depot (node 0) and $N$ customer nodes. The objective is to minimize the total travel distance of all vehicles such that: (1) each customer is visited exactly once; (2) all routes start and end at the depot; and (3) the total demand on any route does not exceed the vehicle capacity $Q$.
Mathematically, let $s = \{\rho^1, \dots, \rho^k\}$ be a solution with $k$ routes. The objective function is defined as:
\begin{equation}
    f(s) = \sum_{j=1}^{k} \left( d_{0, \rho^j_1} + \sum_{t=1}^{|\rho^j|-1} d_{\rho^j_t, \rho^j_{t+1}} + d_{\rho^j_{|\rho^j|}, 0} \right)
\end{equation}
subject to the capacity constraint $\sum_{node \in \rho^j} \text{d}_{node} \le Q$ for all routes $j$.

\textbf{0-1 Knapsack Problem (KP).}
The KP represents a fundamental resource allocation problem. Given a knapsack with capacity $W$ and a set of $N$ items, each characterized by a value $v_i$ and a weight $w_i$, the goal is to select a subset of items to maximize the total value without violating the capacity constraint.
\begin{equation}
    \max_{x} \sum_{i=1}^{N} v_i x_i \quad \text{s.t.} \quad \sum_{i=1}^{N} w_i x_i \le W, \quad x_i \in \{0, 1\}
\end{equation}
Following established protocols \cite{kwon2020pomo}, we generate instances where weights and values are sampled from uniform distributions.

\textbf{Multiple Knapsack Problem (MKP).}
The MKP extends the standard KP by introducing multiple resource constraints. Instead of a single container, we have $m$ distinct knapsacks with capacities $C_1, \dots, C_m$.  We still follow the setting of Reevo\cite{ye2024reevo} for the MKP instances, and the objective remains to maximize the total value of selected items, ensuring that the subset of items assigned to each knapsack $j$ does not exceed its specific capacity $C_j$. This problem introduces higher complexity due to the coupled constraints across multiple resources.

\textbf{Bin Packing Problem (BPP).}
The BPP aims to partition a set of items with varying sizes into the minimum number of bins, each with a fixed capacity $C$.  \textbf{Online BPP:} Items arrive sequentially. The algorithm must strictly decide which bin to place the current item in before observing future items. This requires robust heuristic logic to handle uncertainty. We utilize Weibull distribution-based instances for evaluation, as they provide a challenging benchmark for bin packing heuristics \cite{castineiras2012weibull}\cite{liu2024evolution}.

\section{Definition of General Frameworks}
\label{sec:frameworks_definition}

To verify the framework-agnosticism of Clade-AHD, we apply it to design key heuristic functions within two distinct general frameworks: Step-by-Step Construction and Ant Colony Optimization (ACO). This section details the operational logic of these frameworks.

\subsection{Step-by-Step Construction}
The Step-by-Step Construction framework (also known as the Constructive Heuristic framework) is an intuitive approach capable of handling a wide range of CO problems. It constructs a complete solution from scratch by sequentially selecting decision variables. In each step, the framework evaluates all valid candidates using a specific priority function, and the candidate with the highest priority is added to the partial solution.

In this work, Clade-AHD is employed to automatically design this \textbf{key priority function}. We apply this framework to four tasks:
\begin{itemize}
    \item \textbf{TSP:} The heuristic function selects the next city to visit based on the current city, the set of unvisited cities, and the distance matrix. It effectively learns a "next-node-prediction" policy similar to Pointer Networks \cite{vinyals2015pointer}.
    \item \textbf{KP:} The function selects the next item to pack into the knapsack. It inputs the value and weight of remaining items and the current residual capacity, outputting a priority score for each item. This generalizes the classical greedy strategy based on the value-to-weight ratio.
    \item \textbf{Online BPP:} For each incoming item, the heuristic assigns a preference score to each available bin (and the option of opening a new bin) based on the item size and the bins' residual capacities.
\end{itemize}

\subsection{Ant Colony Optimization (ACO)}

Ant Colony Optimization (ACO) is a meta-heuristic inspired by the foraging behavior of ants \cite{dorigo2007ant}. It constructs solutions probabilistically based on a pheromone matrix $\tau$ (historical experience) and a heuristic matrix $\eta$ (problem-specific desirability).

Following the established protocol in \textbf{ReEvo \cite{ye2024reevo}}, we utilize Clade-AHD to automatically evolve the \textit{Heuristic Matrix Generation Function}. Instead of using a fixed formula (e.g., $\eta_{ij} = 1/d_{ij}$), the LLM generates a Python function that computes $\eta$ dynamically based on the specific constraints of each problem. The specific formulations for the involved tasks are detailed below:

\begin{itemize}
    \item \textbf{Traveling Salesman Problem (TSP):}
    The problem is modeled as a graph traversal. The LLM generates a function to compute an $N \times N$ matrix $\eta$, where $\eta_{ij}$ represents the desirability of moving from city $i$ to city $j$. The generated heuristic typically balances Euclidean distance with other geometric features to guide the ant's path construction.

    \item \textbf{Capacitated Vehicle Routing Problem (CVRP):}
    Similar to TSP, this is an edge-based selection problem but with capacity constraints. The heuristic function inputs the distance matrix, node demands, and vehicle capacity. It outputs a matrix $\eta_{ij}$ that guides the vehicle to the next customer $j$ from $i$. A high-quality heuristic here must learn to balance minimizing travel distance with optimizing the remaining vehicle load (e.g., avoiding visiting a distant customer that consumes the last bit of capacity).

    \item \textbf{Multiple Knapsack Problem (MKP):}
    Unlike the routing problems, MKP is a subset selection problem. The constructive process involves selecting items to place into knapsacks. The LLM designs a function that calculates a heuristic vector (or matrix) $\eta$, representing the priority of selecting item $j$ given the current resource consumption of the knapsacks. The heuristic effectively learns a generalized "value-per-weight" density estimation across multiple dimensions.

    \item \textbf{Offline Bin Packing Problem (BPP):}
    The goal is to pack items into the minimum number of bins. In the ACO framework, this is treated as selecting the best "next item" to pack into the current bin. The LLM generates a function that computes the desirability $\eta_j$ for each remaining item $j$, considering its size and the residual capacity of the current bin. This allows the algorithm to evolve complex strategies beyond standard "Best Fit" or "First Fit" rules.
\end{itemize}

By adopting this unified interface, Clade-AHD leverages the global search capability of the underlying ACO solver while replacing the handcrafted, problem-specific heuristic rules with data-driven logic evolved by our clade-based search.

\section{Experimental Setup and Implementation Details}
\label{appendix_D}

To ensure the transparency, reproducibility, and fairness of our comparative study, this appendix provides a comprehensive description of the experimental protocols. We detail the evaluation logic, dataset composition, computational environment, and specific hyperparameter configurations for both Clade-AHD and the baseline methods.

\subsection{Evaluation Protocols}
The setup of evaluation budgets ($T$) and evaluation datasets ($D$) in this work generally follows the standard protocols established by FunSearch \cite{romera2024mathematical}, EoH \cite{liu2024evolution}, and ReEvo\cite{ye2024reevo}. Adopting these community-recognized settings ensures that our performance gains are attributable to the search strategy itself rather than skewed experimental conditions.

\paragraph{The Setting of Evaluation Budget $T$.}
In population-based methods like EoH \cite{liu2024evolution}, the total number of evaluations is typically determined by $\text{generations} \times \text{population\_size}$ (e.g., $20 \times 20 \approx 400 \sim 1,600$ queries).
To align with this computational magnitude while ensuring sufficient exploration for our Bayesian beliefs, we standardize the maximum evaluation budget to $T = 1,000$ for all tasks (including TSP, KP, and Online BPP). This strict budget constraint challenges the algorithm to be sample-efficient, distinguishing efficient search strategies from brute-force enumeration.

\paragraph{The Setting of Evaluation Dataset $D$.}
The composition of $D$ is critical for preventing overfitting. Our design logic balances fairness with robustness:
\begin{itemize}
    \item \textbf{Standardization for Fairness (General Tasks):} For TSP, CVRP, KP, and MKP, we strictly adopt the standard synthetic instances used in MCTS-AHD \cite{Zheng2025} and EoH \cite{liu2024evolution}. This ensures a direct "apples-to-apples" comparison with baselines.
    \item \textbf{Variation for Robustness (Special Case: Online BPP):} Standard baselines often use a fixed dataset (e.g., 5,000 items with $W=100$). However, heuristics evolved on such uniform data frequently fail to generalize to other scales (e.g., $W=500$). To address this, we implement a \textbf{varying-scale setup} similar to \cite{castineiras2012weibull}. Our evaluation dataset $D$ comprises a mixture of Weibull distribution instances with diverse characteristics (varying item counts and capacities), compelling the heuristic to learn generalized packing logic.
\end{itemize}

The detailed specifications of $D$ are summarized in Table \ref{tab:eval_datasets_detailed}.

\begin{table*}[t]
    \centering
    \caption{Detailed Specifications of Evaluation Datasets ($\mathbf{D_S}$).}
    \label{tab:eval_datasets_detailed}
    \small
    \renewcommand{\arraystretch}{1.3}
    \begin{tabularx}{\textwidth}{ll >{\raggedright\arraybackslash}X}
        \toprule
        \textbf{Task} & \textbf{Dataset Size} & \textbf{Instance Details (Attributes)} \\
        \midrule
        \textbf{TSP} & 64 Instances & $N=50$ nodes, coordinates $s \sim \mathcal{U}[0,1]^{2d}$. \\
        \textbf{CVRP} & 64 Instances & $N=50$ nodes, Demand $s \sim \mathcal{U}[1,9]$, Capacity varies. \\
        \textbf{KP} & 64 Instances & $N=100$ items, Capacity $S_W=25$, uncorrelated weights/values. \\
        \textbf{MKP} & 64 Instances & $N=50$ items, $M=5$ knapsacks. \\
        \textbf{Online BPP.} & \textbf{Mixture (4 Sets)} & \textbf{Varying Scales (Weibull Dist.):} A robust mix of Short ($N \approx 1k$) vs. Long ($N \approx 5k$) sequences, and Small vs. Large Bin Capacities. \\
        \bottomrule
    \end{tabularx}
\end{table*}

\subsection{Computational Environment}
\textbf{Hardware Setup.}
Unlike many Neural Combinatorial Optimization (NCO) approaches that rely on heavy GPU acceleration (e.g., POMO \cite{kwon2020pomo}, DeepACO \cite{ye2023deepaco}), Clade-AHD is designed to be computationally lightweight. All experiments were conducted on a personal workstation:
\begin{itemize}
    \item \textbf{CPU:} AMD Ryzen 7 8845HS (8 cores, 16 threads).
    \item \textbf{GPU:} None (Inference is purely CPU-based).
    \item \textbf{Software:} Python 3.10, leveraging \texttt{NumPy} for vectorization.
\end{itemize}

\textbf{LLM Configuration.}
We utilized \textbf{GPT-4o-mini} (\texttt{gpt-4o-mini-2024-07-18}) as the core reasoning engine. The temperature is set to $T=1.0$ during the \textit{Expansion} phase to maximize code diversity, and $T=0.2$ during \textit{Selection} logic (if aided) for stability. The context window includes the task description, parent code, and a history of the top-3 past successful mutations.

\subsection{Baseline and Hyperparameter Configuration}
To validate the effectiveness of Clade-AHD, we compare it against four state-of-the-art LLM-based AHD methods under identical budget constraints ($T=1,000$):

Table \ref{tab:hyperparameters} summarizes the specific hyperparameters used in Clade-AHD, empirically tuned on a separate validation set.

\begin{table}[h]
    \centering
    \caption{Hyperparameter settings for Clade-AHD.}
    \label{tab:hyperparameters}
    \footnotesize
    \setlength{\tabcolsep}{3pt}
    \begin{tabular}{l|c|p{3.5cm}}
    \toprule
    \textbf{Parameter} & \textbf{Value} & \textbf{Description} \\
    \midrule
    $N_{init}$ & 4 & Number of initial random heuristics \\
    $K$ (Population) & 10 & Maximum active population size \\
    \midrule
    \textit{Bayesian Inference} & & \\
    $\alpha_{prior}, \beta_{prior}$ & 1.0 & Uniform Beta priors (uninformative) \\
    $n_{pseudo}$ & 10 & Pseudo-counts for prior stabilization \\
    $\omega_{cool}$ & 1.0 & Annealing rate for Thompson Sampling \\
    \midrule
    \textit{Tree Update} & & \\
    $\lambda$ & 0.8 & Decay factor for Depth-Attenuated Backpropagation \\
    $V_{min}$ & 10 & Minimum visits required before freezing \\
    $\gamma$ & 0.1 & Pruning threshold for Dynamic Clade Freezing \\
    \midrule
    \textit{Evaluation} & & \\
    Timeout & 180s & Max execution time per heuristic on dataset $D$ \\
    \bottomrule
    \end{tabular}
\end{table}

\section{Detailed Methodology}
\label{sec:appendix_e}

\subsection{Prompts of MCTS Actions}

To ensure a strictly fair comparison and to isolate the contribution of our Clade-based search strategy from prompt engineering, \textbf{we adopt the exact prompt templates provided in MCTS-AHD \cite{Zheng2025} without any modification}.

We utilize the six distinct actions ($i1, e1, e2, m1, m2, s1$) defined in MCTS-AHD for initialization and expansion. The specific prompts for these actions are presented below. In all examples, we use the Traveling Salesman Problem (TSP) with the step-by-step construction framework as the representative task context.

\begin{promptbox}{Prompt for Action i1 (Initialization)}
\small
\textbf{Task Context:} \\
Solving Traveling Salesman Problem (TSP) with constructive heuristics. TSP requires finding the shortest path that visits all given nodes and returns to the starting node.

\textbf{Instruction:} \\
First, describe the design idea and main steps of your algorithm in one sentence. The description must be inside a brace outside the code implementation. Next, implement it in Python as a function named 'select\_next\_node'. \\
This function should accept 4 input(s): 'current\_node', 'destination\_node', 'unvisited\_nodes', 'distance\_matrix'. \\
The function should return 1 output(s): 'next\_node'. The select next node function takes as input the current node, the destination node, a set of unvisited nodes, and a distance matrix, and returns the next node to visit. \\
Do not give additional explanations.
\end{promptbox}

\begin{promptbox}{Prompt for Action m1 (Mutation \textminus{} Mechanism)}
\small
\textbf{Task Context:} \\
Solving Traveling Salesman Problem (TSP) with constructive heuristics. TSP requires finding the shortest path that visits all given nodes and returns to the starting node.

\textbf{Input Context:} \\
I have one algorithm with its code as follows: \\
\# Its Description \\
\# It's Python Code Implementation of A Function

\textbf{Instruction:} \\
Please create a new algorithm that has a different form, but can be a modified version of the provided algorithm. Attempt to introduce more novel mechanisms and new equations or programme segments. \\
First, describe the design idea and main steps of your algorithm in one sentence. The description must be inside a brace outside the code implementation. Next, implement it in Python as a function named 'select\_next\_node'. \\
This function should accept 4 input(s): 'current\_node', 'destination\_node', 'unvisited\_nodes', 'distance\_matrix'. \\
The function should return 1 output(s): 'next\_node'. The select next node function takes as input the current node, the destination node, a set of unvisited nodes, and a distance matrix, and returns the next node to visit. \\
Do not give additional explanations.
\end{promptbox}

\begin{promptbox}{Prompt for Action m2 (Mutation \textminus{} Parameter)}
\small
\textbf{Task Context:} \\
Solving Traveling Salesman Problem (TSP) with constructive heuristics. TSP requires finding the shortest path that visits all given nodes and returns to the starting node.

\textbf{Input Context:} \\
I have one algorithm with its code as follows: \\
\# Its Description \\
\# It's Python Code Implementation of A Function

\textbf{Instruction:} \\
Please identify the main algorithm parameters and help me in creating a new algorithm that has different parameter settings for equations compared to the provided algorithm. \\
First, describe the design idea and main steps of your algorithm in one sentence. The description must be inside a brace outside the code implementation. Next, implement it in Python as a function named 'select\_next\_node'. \\
This function should accept 4 input(s): 'current\_node', 'destination\_node', 'unvisited\_nodes', 'distance\_matrix'. \\
The function should return 1 output(s): 'next\_node'. The select next node function takes as input the current node, the destination node, a set of unvisited nodes, and a distance matrix, and returns the next node to visit. \\
Do not give additional explanations.
\end{promptbox}

\begin{promptbox}{Prompt for Action e1 (Crossover \textminus{} Divergent)}
\small
\textbf{Task Context:} \\
Solving Traveling Salesman Problem (TSP) with constructive heuristics. TSP requires finding the shortest path that visits all given nodes and returns to the starting node.

\textbf{Input Context:} \\
I have $k$ existing algorithms with their codes as follows: \\
No.1 algorithm's description, its corresponding code, and its objective value are: \\
... \\
No.$k$ algorithm's description, its corresponding code, and its objective value are: \\
...

\textbf{Instruction:} \\
Please create a new algorithm that has a totally different form from the given algorithms. Try generating codes with different structures, flows, or algorithms. The new algorithm should have a relatively low objective value. \\
First, describe the design idea and main steps of your algorithm in one sentence. The description must be inside a brace outside the code implementation. Next, implement it in Python as a function named 'select\_next\_node'. \\
This function should accept 4 input(s): 'current\_node', 'destination\_node', 'unvisited\_nodes', 'distance\_matrix'. \\
The function should return 1 output(s): 'next\_node'. The select next node function takes as input the current node, the destination node, a set of unvisited nodes, and a distance matrix, and returns the next node to visit. \\
Do not give additional explanations.
\end{promptbox}

\begin{promptbox}{Prompt for Action e2 (Crossover \textminus{} Elitist)}
\small
\textbf{Task Context:} \\
Solving Traveling Salesman Problem (TSP) with constructive heuristics. TSP requires finding the shortest path that visits all given nodes and returns to the starting node.

\textbf{Input Context:} \\
I have 2 existing algorithms with their codes as follows: \\
No.1 algorithm's description, its corresponding code, and its objective value are: \\
... \\
No.2 algorithm's description, its corresponding code,e and its objective value are: \\
...

\textbf{Instruction:} \\
Please create a new algorithm that has a similar form to the No.2 algorithm and is inspired by the No.1 algorithm. The new algorithm should have an objective value lower than both algorithms. \\
Firstly, list the common ideas in the No.1 algorithm that may give good performances. Secondly, based on the common idea, describe the design idea based on the No.len(indivs) algorithm and main steps of your algorithm in one sentence. The description must be inside a brace. Next, implement it in Python as a function named 'select\_next\_node'. \\
This function should accept 4 input(s): 'current\_node', 'destination\_node', 'unvisited\_nodes', 'distance\_matrix'. \\
The function should return 1 output(s): 'next\_node'. The select next node function takes as input the current node, the destination node, a set of unvisited nodes, and a distance matrix, and returns the next node to visit. \\
Do not give additional explanations.
\end{promptbox}

\begin{promptbox}{Prompt for Action s1 (Tree-Path Reasoning)}
\small
\textbf{Task Context:} \\
Solving Traveling Salesman Problem (TSP) with constructive heuristics. TSP requires finding the shortest path that visits all given nodes and returns to the starting node.

\textbf{Input Context:} \\
I have $k$ existing algorithms with their codes as follows: \\
No.1 algorithm's description, its corresponding code, and its objective value are: \\
... \\
No.$k$ algorithm's description, its corresponding code, and its objective value are: \\
...

\textbf{Instruction:} \\
Please help me create a new algorithm that is inspired by all the above algorithms with its objective value lower than any of them. \\
Firstly, list some ideas in the provided algorithms that are clearly helpful to a better algorithm. Secondly, based on the listed ideas, describe the design idea and main steps of your new algorithm in one sentence. The description must be inside a brace. Thirdly, implement it in Python as a function named 'select\_next\_node'. \\
This function should accept 4 input(s): 'current\_node', 'destination\_node', 'unvisited\_nodes', 'distance\_matrix'. \\
The function should return 1 output(s): 'next\_node'. The select next node function takes as input the current node, the destination node, a set of unvisited nodes, and a distance matrix, and returns the next node to visit. \\
Do not give additional explanations.
\end{promptbox}

\section{Licenses}
\label{sec:appendix_f}

To ensure transparency and facilitate reproducibility, we strictly adhere to the open-source licenses of all baselines and datasets used in this work. The specific licenses and access URLs are summarized in Table \ref{tab:licenses}.

\begin{table*}[t]
    \centering
    \caption{A summary of licenses for baselines and resources used in this work.}
    \label{tab:licenses}
    \begin{tabular}{l|c|l|l}
    \toprule
    \textbf{Resources} & \textbf{Type} & \textbf{License} & \textbf{URL} \\
    \midrule
    LKH3 & Code & Academic research use & \url{http://webhotel4.ruc.dk/~keld/research/LKH-3/} \\
    OR-Tools & Code & MIT License & \url{https://developers.google.com/optimization/} \\
    POMO & Code & Available online & \url{https://github.com/yd-kwon/POMO} \\
    DeepACO & Code & MIT License & \url{https://github.com/henry-yeh/DeepACO} \\
    VRP-DACT & Code & MIT License & \url{https://github.com/yining043/VRP-DACT} \\
    NeuOpt & Code & MIT License & \url{https://github.com/yining043/NeuOpt} \\
    FunSearch & Code & Apache License 2.0 & \url{https://github.com/google-deepmind/funsearch} \\
    EoH & Code & MIT License & \url{https://github.com/FeiLiu36/EoH} \\
    ReEvo & Code & MIT License & \url{https://github.com/ai4co/reevo} \\
    HSEvo & Code & Available online & \url{https://github.com/datphamvn/HSEvo} \\
    \midrule
     Synthetic Datasets & Dataset & Available online & \url{https://github.com/FeiLiu36/EoH/tree/main/examples} \\
    TSPLib & Dataset & Non-commercial use & \url{http://comopt.ifi.uni-heidelberg.de/software/TSPLIB95} \\
    \bottomrule
    \end{tabular}
\end{table*}

\bibliographystyle{IEEEtran}
\bibliography{references}